\newtheorem{theorem}{Theorem}
\newtheorem{lemma}[theorem]{Lemma}
\theoremstyle{definition}
\newtheorem{problem}[theorem]{Problem}
\newtheorem{definition}[theorem]{Definition}
\newcommand{\sqz}{\operatorname{sqz}}
\newcommand{\dual}{\operatorname{dual}}
\newcommand{\val}{\operatorname{val}}
\newcommand{\sspan}{\operatorname{span}}
\newcommand{\tr}{\operatorname{tr}}
\title{SqueezeFit: Label-aware dimensionality reduction\\by semidefinite programming}
\author{Culver McWhirter\thanks{The Ohio State University, Columbus, OH 43210} \qquad Dustin G.\ Mixon\footnotemark[1] \qquad Soledad Villar\thanks{New York University, New York, NY 10003}}
\date{}
\begin{document}
\maketitle

\begin{abstract}
Given labeled points in a high-dimensional vector space, we seek a low-dimensional subspace such that projecting onto this subspace maintains some prescribed distance between points of differing labels.
Intended applications include compressive classification.
Taking inspiration from large margin nearest neighbor classification, this paper introduces a semidefinite relaxation of this problem.
Unlike its predecessors, this relaxation is amenable to theoretical analysis, allowing us to provably recover a planted projection operator from the data.
\end{abstract}

\section{Introduction}

The last decade of sampling theory has transformed the way we reconstruct signals from measurements.
For example, the now-established theory of compressed sensing allows one to reconstruct a signal from a number of random linear measurements that is proportional to the complexity of that signal~\cite{Donoho:06,CandesRT:06,FoucartR:13}, potentially speeding up MRI scans by a factor of five~\cite{LustigDP:07}.
This theory has since transferred to the setting of nonlinear measurements in the context of phase retrieval~\cite{CandesSV:13,CandesESV:15}, leading to new algorithms for coherent diffractive imaging~\cite{SidorenkoEtal:15}.
Today, we witness major technological advances in machine learning, where neural networks have recently achieved unprecedented performance in image classification and elsewhere~\cite{KrizhevskySH:12,SilverEtal:16}.
This motivates another fundamental problem for sampling theory:
\begin{center}
\textit{How many samples are necessary to enable signal classification?}
\end{center}
For instance, why waste time collecting enough samples to completely reconstruct a given signal if you only need to detect whether the signal contains an anomaly?

This different approach to sampling is known as \textbf{compressive classification}.
While the idea has been around since 2007, to date, only three works provide theory to derive sampling rates for compressive classification.
First, \cite{DavenportEtal:07} considered the case where each class is a low-dimensional manifold.
Much later, \cite{ReboredoRCR:16} compressively classified mixtures of Gaussians of low-rank covariance, and then \cite{BandeiraMR:17} derived sampling rates for random projection to maintain separation between full-dimensional ellipsoids.
Overall, these works assumed that the classes follow a specific model (be it manifolds, Gaussians or ellipsoids), and then derived conditions under which a good projection exists.
The present work takes a dual approach:
We assume that compressive classification is possible, meaning there exists a planted low-rank projection that facilitates classification, and the task is to derive conditions on the classes for which finding that projection is feasible:


\begin{problem}[projection factor recovery]
\label{prob.pfr}
Let $\Pi$ denote orthogonal projection onto some unknown subspace $T\subseteq\mathbb{R}^d$ of some unknown dimension.
What conditions on $f\colon T\to[k]:=\{1,\ldots,k\}$ and $\mathcal{X}\subseteq\mathbb{R}^d$ enable exact or approximate recovery of $\Pi$ from data of the form $\{(x,f(\Pi x))\}_{x\in\mathcal{X}}$?
\end{problem}

In words, we assume the classification function factors through some unknown orthogonal projection operator $\Pi$, and the objective is to reconstruct $\Pi$.
Once we find $\Pi$ of rank $r$, then we may write $\Pi=A^\top A$ for some $r\times d$ sensing matrix $A$, and then $Ax$ determines the classification $f(\Pi x)$ of $x$ despite using only $r\ll d$ samples.
Here and throughout, we consider a sequence of data $\mathcal{D}=\{(x_i,y_i)\}_{i\in\mathcal{I}}$ in $\mathbb{R}^d\times[k]$ and denote $\mathcal{Z}(\mathcal{D}):=\{x_i-x_j:i,j\in\mathcal{I},y_i\neq y_j\}$.
The following program finds the best orthogonal projection for our purposes:
\begin{equation}
\label{eq.min rank program}
\text{minimize}
\quad
\operatorname{rank}\Pi
\quad
\text{subject to}
\quad
\|\Pi z\|\geq\Delta~~\forall z\in\mathcal{Z}(\mathcal{D}),
\quad
\Pi^\top=\Pi,
\quad
\Pi^2=\Pi
\end{equation}
Here, $\Pi$ is the decision variable, whereas $\Delta>0$ is a parameter that prescribes a desired minimum distance between projected points $\Pi x_i$ and $\Pi x_j$ with differing labels.
This parameter reflects a fundamental tension in compressive classification:
We want $\Delta$ to be large so as to enable classification, but we also want $\operatorname{rank}\Pi$ to be small so that this classification is compressive.
Since it is not clear how to tractably implement \eqref{eq.min rank program}, we consider a convex relaxation:
\begin{equation}
\text{minimize}
\quad
\operatorname{tr}M
\quad
\text{subject to}
\quad
z^\top Mz\geq\Delta^2~~\forall z\in\mathcal{Z}(\mathcal{D}),
\quad
0\preceq M\preceq I
\tag{$\sqz(\mathcal{D},\Delta)$}
\end{equation}
We refer to this program as \textbf{SqueezeFit}.
If $\mathcal{Z}(\mathcal{D})$ is finite, then $\sqz(\mathcal{D},\Delta)$ is a semidefinite program, otherwise $\sqz(\mathcal{D},\Delta)$ is a semi-infinite program \cite{semi-infinite}.
In either case, the minimum exists whenever $\sqz(\mathcal{D},\Delta)$ is feasible by the extreme value theorem.
As Figure~\ref{fig.comparison} illustrates, SqueezeFit is well suited for projection factor recovery.

\begin{figure}[t]
\begin{center}
\includegraphics[width=\textwidth]{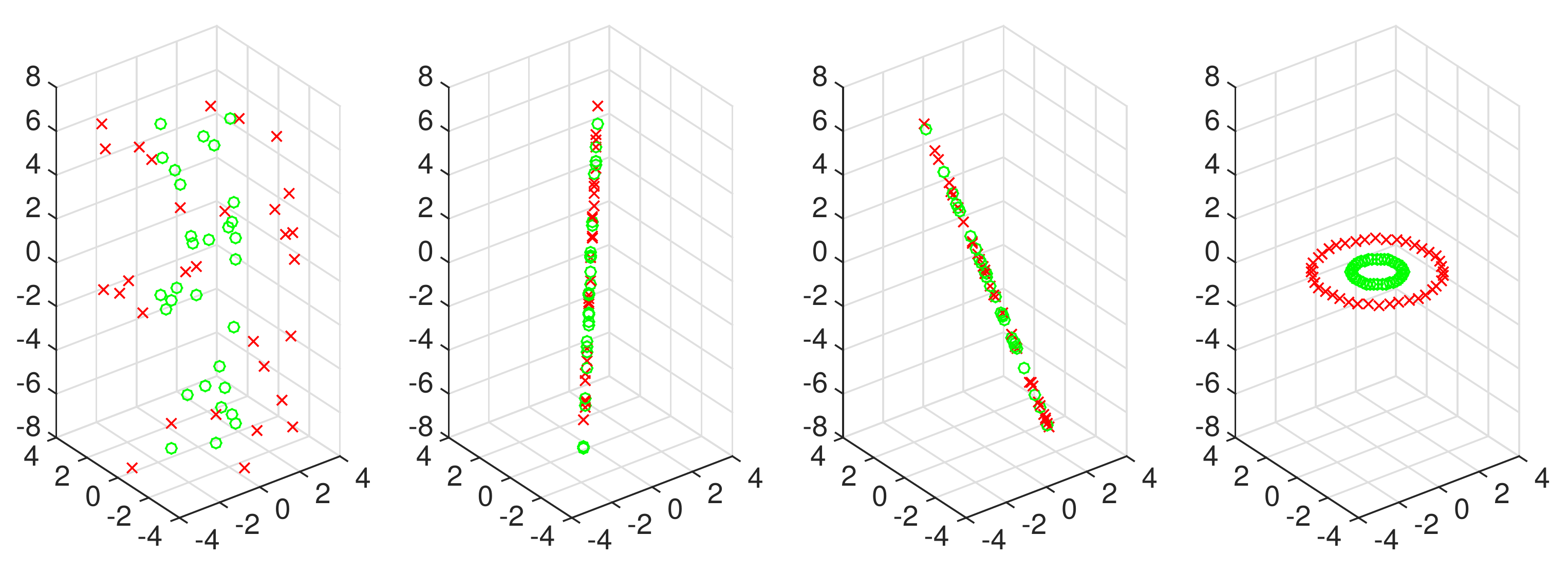}
\end{center}
\caption{\label{fig.comparison}
\textbf{(far left)}
Plot of $60$ data points in $\mathbb{R}^3$, half in one class, half in another.
These points were drawn according to a random model with an unknown planted projection factor (as in Problem~\ref{prob.pfr}).
\textbf{(middle left)}
Principal component analysis (PCA) suggests one-dimensional structure in the data.
Projecting onto this subspace (which was identified without regard for the points' classes) results in an undesirable mixture of the classes.
\textbf{(middle right)}
Unlike PCA, linear discriminant analysis (LDA) actually considers which class each point belongs to.
Since there are two classes, the result is projection onto a $1$-dimensional subspace, obtained by applying the classes' inverse covariance matrix to the difference of class centroids.
Unfortunately, the result is again an unhelpful mixture of classes.
\textbf{(far right)}
Unlike PCA and LDA, SqueezeFit finds a low-rank projection that maintains some amount of distance between points from different classes.
The resulting projection is a close approximation to the planted projection factor.
See Section~\ref{sec.pfr with sf} for theoretical guarantees that help explain this behavior.
}
\end{figure}

When formulating SqueezeFit, the authors took inspiration from the large margin nearest neighbor (LMNN) algorithm~\cite{WeinbergerS:09}, which finds the $d\times d$ matrix $M\succeq0$ such that $\{(M^{1/2}x_i,y_i)\}_{i\in\mathcal{I}}$ is best conditioned for $k$-nearest neighbor classification in the Euclidean distance (unlike above, $k$ does not correspond to the number of classes here).
To accomplish this, LMNN first identifies for each $x_i$, the $k$ closest $x_j$ such that $y_j=y_i$; these are called \textit{target neighbors}.
Next, $x_l$ is called an \textit{impostor} of $x_i$ under $M$ if $x_i$ has a target neighbor $x_j$ such that $\|M^{1/2}(x_i-x_l)\|^2\leq\|M^{1/2}(x_i-x_j)\|^2+1$.
Intuitively, $\{(M^{1/2}x_i,y_i)\}_{i\in\mathcal{I}}$ is well conditioned for $k$-nearest neighbors if the target neighbors are all close to each other, and the number of impostors is small.
To this end, LMNN uses a semidefinite program to find the $M\succeq0$ that simultaneously optimizes these conflicting objectives.

While LMNN has proven to be an effective tool for metric learning, there is currently a dearth of theory to explain its performance.
By contrast, our formulation of SqueezeFit is particularly amenable to theoretical analysis, which we credit to two features:
First, we do not require a pre-processing step to define target neighbors, thereby isolating how our algorithm depends on the data.
In exchange for this lack of pre-processing, we accept the hyperparameter $\Delta$ to provide some notion of ``impostor.''
Second, SqueezeFit includes the identity constraint $M\preceq I$, which proves particularly valuable to the theory.
For example, the identity constraint plays a key role in the proof that, if $M$ is SqueezeFit-optimal for $\{(x_i,y_i)\}_{i\in\mathcal{I}}$, then the only SqueezeFit-optimal operator for $\{(M^{1/2}x_i,y_i)\}_{i\in\mathcal{I}}$ is orthogonal projection onto $\sspan\{M^{1/2}x_i\}_{i\in\mathcal{I}}$ (see Theorem~\ref{thm.sqz as proj}).
In words, \textit{you don't need to squeeze your data more than once}.

In the next section, we study some of the important geometric features of SqueezeFit, and then we use these features to analyze strong duality.
This analysis is a prerequisite for Section~\ref{sec.pfr with sf}, where we derive conditions under which SqueezeFit successfully performs projection factor recovery.
SqueezeFit also performs well in practice, which we illustrate in Section~4 with an assortment of numerical experiments.
We conclude in Section~5 with a discussion of various open questions and opportunities for future work.

\section{Model-free theory}

\subsection{The geometry of SqueezeFit}

Throughout, $\mathcal{D}$ denotes a sequence in $\mathbb{R}^d\times[k]$ without mention.

\begin{definition}
We say $\mathcal{D}=\{(x_i,y_i)\}_{i\in\mathcal{I}}$ is \textbf{$\Delta$-fixed} if there exists $M\in \arg\sqz(\mathcal{D},\Delta)$ such that $M^{1/2}x_i=x_i$ for every $i\in\mathcal{I}$.
\end{definition}

\begin{lemma}
\label{lem.ep.vectors}
Pick any $\mathcal{D}=\{(x_i,y_i)\}_{i\in\mathcal{I}}$.
\begin{itemize}
\item[(i)]
$\mathcal{D}$ is $\Delta$-fixed if and only if orthogonal projection onto $\operatorname{span}\{x_i\}_{i\in\mathcal{I}}$ is the unique member of $\arg\sqz(\mathcal{D},\Delta)$.
\item[(ii)]
If $\mathcal{D}$ is $\Delta$-fixed, then $\operatorname{span}\mathcal{Z}(\mathcal{D})=\operatorname{span}\{x_i\}_{i\in\mathcal{I}}$.
\end{itemize}
\end{lemma}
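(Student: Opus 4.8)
The plan is to reduce both parts to one observation about \emph{compressing} a feasible point onto $T:=\operatorname{span}\{x_i\}_{i\in\mathcal I}$. Write $P$ for orthogonal projection onto $T$, so that $P\succeq0$, $P^2=P$, and $P^{1/2}=P$; note also that for $0\preceq M\preceq I$ the condition $M^{1/2}x_i=x_i$ is equivalent to $Mx_i=x_i$, so $\mathcal D$ being $\Delta$-fixed means exactly that some member of $\arg\sqz(\mathcal D,\Delta)$ restricts to the identity on $T$. The first lemma I would record: if $M$ is feasible for $\sqz(\mathcal D,\Delta)$, then so is $\widetilde M:=PMP$, since $0\preceq PMP\preceq PIP=P\preceq I$ and every $z\in\mathcal Z(\mathcal D)$ lies in $T$, whence $z^\top\widetilde Mz=z^\top Mz\geq\Delta^2$. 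Splitting $M$ into the four blocks cut out by $I=P+(I-P)$ and taking the trace (the off-diagonal blocks vanish under $\operatorname{tr}$) gives $\operatorname{tr}M=\operatorname{tr}(PMP)+\operatorname{tr}((I-P)M(I-P))$, so $\operatorname{tr}\widetilde M\leq\operatorname{tr}M$, with equality if and only if $M$ vanishes on $T^\perp$, i.e.\ $\operatorname{range}M\subseteq T$. A second, trivial observation: since $\sqz(\mathcal D,\Delta)$ is feasible throughout, any feasible $M$ gives $\|z\|^2\geq z^\top Mz\geq\Delta^2$ for all $z\in\mathcal Z(\mathcal D)$, so $P$ is itself feasible and $\operatorname{tr}P=\dim T$.

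For (i), the direction $(\Leftarrow)$ is immediate: if $P$ is the unique optimizer, then $M=P$ witnesses $\Delta$-fixedness because $Px_i=x_i$. For $(\Rightarrow)$, start from a $\Delta$-fixed witness $M$; since $M$ acts as the identity on the invariant subspace $T$, we have $M=I_T\oplus M|_{T^\perp}$ with $0\preceq M|_{T^\perp}\preceq I$, so $\operatorname{tr}M\geq\dim T=\operatorname{tr}P$, and optimality of $M$ together with feasibility of $P$ forces $\operatorname{tr}M=\dim T$, hence $M|_{T^\perp}=0$, $M=P$, and the optimal value of $\sqz(\mathcal D,\Delta)$ equals $\dim T$. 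To finish, take an arbitrary optimizer $M'$: its compression $\widetilde M'=PM'P$ is feasible with $\operatorname{tr}\widetilde M'\leq\operatorname{tr}M'=\dim T$, which is the optimal value, so $\operatorname{tr}\widetilde M'=\dim T=\operatorname{tr}M'$ and therefore $\operatorname{range}M'\subseteq T$; restricted to the $\dim T$-dimensional space $T$, the operator $M'$ is positive semidefinite with eigenvalues in $[0,1]$ summing to $\dim T$, forcing every one of them to equal $1$, so $M'=P$. This is the uniqueness claim.

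For (ii), assume $\mathcal D$ is $\Delta$-fixed; by (i) the optimal value of $\sqz(\mathcal D,\Delta)$ is $\dim T$. The inclusion $\operatorname{span}\mathcal Z(\mathcal D)\subseteq T$ is automatic. If it were strict, then orthogonal projection $Q$ onto $\operatorname{span}\mathcal Z(\mathcal D)$ would be feasible (again $z^\top Qz=\|z\|^2\geq\Delta^2$ for $z\in\mathcal Z(\mathcal D)$) with $\operatorname{tr}Q=\dim\operatorname{span}\mathcal Z(\mathcal D)<\dim T$, contradicting optimality; hence $\operatorname{span}\mathcal Z(\mathcal D)=T$.

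I expect the main obstacle to be the uniqueness assertion in (i): showing $P$ is \emph{an} optimizer is soft, but excluding every competing optimizer is exactly what the compression $M\mapsto PMP$ buys us --- it keeps a feasible point feasible while strictly dropping the trace unless the point already lives on $T$. Everything else is routine spectral bookkeeping under the constraint $0\preceq M\preceq I$.
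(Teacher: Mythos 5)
Your proof is correct and follows essentially the same route as the paper's: establish that the orthogonal projection $\Pi$ onto $T=\operatorname{span}\{x_i\}_{i\in\mathcal I}$ is feasible, then use the block decomposition along $T\oplus T^\perp$ together with a trace comparison to pin down optimizers, and the compression-onto-$\operatorname{span}\mathcal Z(\mathcal D)$ argument for (ii) is identical. One small remark in your favor: the paper's proof of (i) only verifies that the $\Delta$-fixed witness $M$ itself equals $\Pi$ (via $M=\Pi+\Gamma$ with $\Gamma\Pi=\Pi\Gamma=0$ and $\operatorname{tr}\Gamma=0$), which shows $\Pi\in\arg\sqz(\mathcal D,\Delta)$ but does not on its face exclude other optimizers; your final paragraph, where you take an arbitrary optimizer $M'$, compress it to $PM'P$, deduce $\operatorname{range}M'\subseteq T$ from the trace equality, and then use the eigenvalue constraint $0\preceq M'\preceq I$ to force $M'|_T=I_T$, is exactly the step needed to deliver the stated \emph{uniqueness}, and it is welcome to see it spelled out.
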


\begin{proof}
(i)
First, ($\Leftarrow$) is immediate.
For ($\Rightarrow$), we have by assumption that there exists $M\in\arg\sqz(\mathcal{D},\Delta)$ with a leading eigenvalue of $1$ whose eigenspace contains every $x_i$.
Let $\Pi$ denote orthogonal projection onto $\sspan\{x_i\}_{i\in\mathcal{I}}$.
Then we may write $M=\Pi+\Gamma$ for some $\Gamma$ satisfying $0\preceq\Gamma\preceq I$ and $\Gamma\Pi=\Pi\Gamma=0$.
Note that $\Pi$ is feasible in $\sqz(\mathcal{D},\Delta)$ since $0\preceq\Pi\preceq I$ and 
\[
z^\top \Pi z=(\Pi z)^\top(\Pi z)=z^\top z\geq z^\top Mz\geq\Delta^2
\]
for every $z\in\mathcal{Z}(\mathcal{D})$.
Finally, we must have $M=\Pi$, i.e., $\Gamma=0$, since otherwise $\tr\Pi < \tr\Pi + \tr\Gamma = \tr M$, thereby violating the assumption that $M\in\arg\sqz(\mathcal{D},\Delta)$.

(ii)
By (i), $\Pi$ is feasible in $\sqz(\mathcal{D},\Delta)$.
Let $\Pi_\mathcal{Z}$ denote orthogonal projection onto $\operatorname{span}\mathcal{Z}(\mathcal{D})$.
Then $0\preceq\Pi_\mathcal{Z}\preceq I$ and 
\[
z^\top \Pi_\mathcal{Z} z=(\Pi_\mathcal{Z} z)^\top(\Pi_\mathcal{Z} z)=z^\top z\geq z^\top \Pi z\geq\Delta^2
\]
for every $z\in\mathcal{Z}(\mathcal{D})$, and so $\Pi_\mathcal{Z}$ is also feasible in $\sqz(\mathcal{D},\Delta)$.
Since $\Pi\in\arg\sqz(\mathcal{D},\Delta)$ by (i), we then have
\[
\operatorname{dim}\sspan\{x_i\}_{i\in\mathcal{I}}
=\tr\Pi
\leq\tr\Pi_\mathcal{Z}
=\operatorname{dim}\sspan\mathcal{Z}(\mathcal{D}).
\]
The definition of $\mathcal{Z}(\mathcal{D})$ implies that $\operatorname{span}\mathcal{Z}(\mathcal{D})\subseteq\operatorname{span}\{x_i\}_{i\in\mathcal{I}}$, and so the above dimension count gives the desired equality.
\end{proof}

\begin{theorem}
\label{thm.sqz as proj}
Given $\mathcal{D}=\{(x_i,y_i)\}_{i\in\mathcal{I}}$, then $\{(M^{1/2}x_i,y_i)\}_{i\in\mathcal{I}}$ is $\Delta$-fixed for every $M\in\arg\sqz(\mathcal{D},\Delta)$.
\end{theorem}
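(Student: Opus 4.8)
The plan is to fix $M\in\arg\sqz(\mathcal{D},\Delta)$ (the statement is vacuous if $\arg\sqz(\mathcal{D},\Delta)=\emptyset$), write $\mathcal{D}':=\{(M^{1/2}x_i,y_i)\}_{i\in\mathcal{I}}$, and reduce to Lemma~\ref{lem.ep.vectors}(i): since that lemma characterizes $\Delta$-fixedness, it suffices to prove that orthogonal projection $P$ onto $W:=\sspan\{M^{1/2}x_i\}_{i\in\mathcal{I}}$ is the \emph{unique} element of $\arg\sqz(\mathcal{D}',\Delta)$. The first, routine step records that $\mathcal{Z}(\mathcal{D}')=\{M^{1/2}z:z\in\mathcal{Z}(\mathcal{D})\}$; in particular every element of $\mathcal{Z}(\mathcal{D}')$ lies in $W$, so $Pz'=z'$ and hence $z'^\top Pz'=\|z'\|^2=z^\top Mz\ge\Delta^2$ for the $z$ with $z'=M^{1/2}z$. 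Thus $P$ is feasible for $\sqz(\mathcal{D}',\Delta)$ with objective value $\tr P=\dim W$.

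The heart of the argument is a two-sided trace bound obtained by ``pushing forward.'' Let $N$ be \emph{any} feasible point of $\sqz(\mathcal{D}',\Delta)$. Then $M^{1/2}NM^{1/2}$ is feasible for $\sqz(\mathcal{D},\Delta)$: it is positive semidefinite, it satisfies $M^{1/2}NM^{1/2}\preceq M^{1/2}IM^{1/2}=M\preceq I$, and for $z\in\mathcal{Z}(\mathcal{D})$ we have $z^\top M^{1/2}NM^{1/2}z=z'^\top Nz'\ge\Delta^2$ with $z'=M^{1/2}z\in\mathcal{Z}(\mathcal{D}')$. Optimality of $M$ gives $\tr M\le\tr(M^{1/2}NM^{1/2})=\tr(MN)$, while $I-N\succeq0$ gives $\tr M-\tr(MN)=\tr\bigl(M^{1/2}(I-N)M^{1/2}\bigr)\ge0$. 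Hence $M^{1/2}(I-N)M^{1/2}=0$; factoring $I-N=B^2$ with $B\succeq0$ forces $BM^{1/2}=0$ and therefore $NM^{1/2}=M^{1/2}$. In other words $N$ acts as the identity on $\operatorname{range}(M)=\operatorname{range}(M^{1/2})$, so in the orthogonal decomposition $\mathbb{R}^d=\operatorname{range}(M)\oplus\operatorname{range}(M)^\perp$ we have $N=I\oplus N_{22}$ with $0\preceq N_{22}\preceq I$, whence $\tr N=\operatorname{rank}M+\tr N_{22}\ge\operatorname{rank}M$.

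To conclude, observe that $W=\sspan\{M^{1/2}x_i\}_{i\in\mathcal{I}}\subseteq\operatorname{range}(M^{1/2})=\operatorname{range}(M)$, so $\tr P=\dim W\le\operatorname{rank}M$. Combining the two bounds, the optimal value of $\sqz(\mathcal{D}',\Delta)$ lies between $\operatorname{rank}M$ (a lower bound valid at every feasible point, by the previous paragraph) and $\tr P\le\operatorname{rank}M$ (achieved by the feasible $P$); so all three are equal, which forces $W=\operatorname{range}(M)$ and $P$ optimal. For uniqueness, any $N\in\arg\sqz(\mathcal{D}',\Delta)$ has $\tr N=\operatorname{rank}M$, so the block identity $\tr N=\operatorname{rank}M+\tr N_{22}$ gives $\tr N_{22}=0$, hence $N_{22}=0$ and $N=I\oplus0=P$. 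Lemma~\ref{lem.ep.vectors}(i), applied to $\mathcal{D}'$, now says $\mathcal{D}'$ is $\Delta$-fixed.

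I expect the main obstacle to be isolating the precise role of the identity constraint $M\preceq I$, which is doing double duty: it is what makes the push-forward $N\mapsto M^{1/2}NM^{1/2}$ land in the feasible set of $\sqz(\mathcal{D},\Delta)$ (we need $M^{1/2}NM^{1/2}\preceq I$, which follows from $N\preceq I$ only because $M\preceq I$), and it is also what supplies the reverse inequality $\tr(MN)\le\tr M$ that collapses the squeeze. Secondary technical points are the passage from the trace identity $M^{1/2}(I-N)M^{1/2}=0$ to the operator identity $NM^{1/2}=M^{1/2}$, and the small but essential observation $W\subseteq\operatorname{range}(M)$, without which projecting onto $W$ could conceivably beat $\operatorname{rank}M$ and $P$ would fail to be optimal.
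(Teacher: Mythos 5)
Your proof is correct, and the central idea agrees with the paper's: push a feasible point $N$ of $\sqz(\mathcal{D}',\Delta)$ forward to $M^{1/2}NM^{1/2}$, observe it is feasible in $\sqz(\mathcal{D},\Delta)$ (using $M\preceq I$ twice, exactly as you flag), and exploit optimality of $M$ to collapse a trace squeeze. Where you diverge is the technical engine used to extract structure from the equality $\tr M=\tr(MN)$. The paper picks a single optimizer $N\in\arg\sqz(\mathcal{D}',\Delta)$, invokes the von Neumann trace inequality on the chain $\tr M\le\tr(MN)\le\sum_j\alpha_j\beta_j\le\sum_j\alpha_j$, and reads off simultaneous diagonalizability plus $\beta_j=1$ when $\alpha_j>0$ from the equality cases, so that $N^{1/2}$ fixes $\operatorname{Col}(M^{1/2})$ and the definition of $\Delta$-fixed is verified directly. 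You instead get $M^{1/2}(I-N)M^{1/2}=0$ from $\tr(M^{1/2}(I-N)M^{1/2})=0$ plus positive-semidefiniteness, then factor $I-N=B^2$ to deduce $(I-N)M^{1/2}=0$, i.e.\ $N$ is the identity on $\operatorname{range}(M)$. This argument is more elementary (no von Neumann, no eigenvalue-equality case analysis), applies to \emph{every} feasible $N$ rather than only optimizers, and yields the cleaner block decomposition $N=I\oplus N_{22}$, from which you get a hard lower bound $\tr N\ge\operatorname{rank}(M)$. Combined with the feasibility of the projection $P$ onto $W=\sspan\{M^{1/2}x_i\}$ and the containment $W\subseteq\operatorname{range}(M)$, the pincer forces $W=\operatorname{range}(M)$ and $P$ unique-optimal, after which Lemma~\ref{lem.ep.vectors}(i) closes the argument. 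So you prove the same theorem via a characterization the paper proved separately, rather than by checking the definition; either is a legitimate endpoint, and your version surfaces a slightly stronger intermediate fact (all feasible points of $\sqz(\mathcal{D}',\Delta)$ fix $\operatorname{range}(M)$ pointwise) at the cost of a somewhat longer write-up.
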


\begin{proof}
Pick $M\in\arg\sqz(\mathcal{D},\Delta)$, put $\mathcal{D}'=\{(M^{1/2}x_i,y_i)\}_{i\in\mathcal{I}}$, and pick $N\in\arg\sqz(\mathcal{D}',\Delta)$.
We claim that $M_0:=(M^{1/2})^\top NM^{1/2}$ is feasible in $\sqz(\mathcal{D},\Delta)$.
First, we have $\mathcal{Z}(\mathcal{D}')=M^{1/2}\mathcal{Z}(\mathcal{D})$, and so the feasibility of $N$ in $\sqz(\mathcal{D}',\Delta)$ implies
\[
z^\top M_0z
=(M^{1/2}z)^\top N (M^{1/2}z)
\geq \Delta^2
\]
for every $z\in\mathcal{Z}(\mathcal{D})$.
Similarly, $0\preceq M_0\preceq I$ follows from the facts that $0\preceq N\preceq I$ and $M\preceq I$:
\begin{align*}
x^\top M_0x
&=(M^{1/2}x)^\top N (M^{1/2}x)\geq0,\\
x^\top M_0x
&=(M^{1/2}x)^\top N (M^{1/2}x)
\leq (M^{1/2}x)^\top (M^{1/2}x)
=x^\top Mx
\leq x^\top x
\end{align*}
for every $x\in\mathbb{R}^d$.
Overall, we indeed have that $M_0$ is feasible in $\sqz(\mathcal{D},\Delta)$.

Next, let $\alpha_1\geq\cdots\geq\alpha_d$ and $\beta_1\geq\cdots\beta_d$ denote the eigenvalues of $M$ and $N$, respectively.
Then the von Neumann trace inequality gives
\[
\sum_{j=1}^d\alpha_j
=\tr M
\leq \tr M_0
=\tr(MN)
\leq\sum_{j=1}^d \alpha_j\beta_j
\leq \sum_{j=1}^d \alpha_j,
\]
where the last inequality uses the facts that $\alpha_j\geq0$ and $\beta_j\leq 1$ for every $j$.
Considering the far left- and right-hand sides, all inequalities are necessarily equalities.
Equality in the von Neumann trace inequality implies that $M$ and $N$ are simultaneously unitarily diagonalizable, while equality in the last inequality implies that $\beta_j=1$ whenever $\alpha_j\neq0$.
As such, $N^{1/2}$ fixes the column space of $M^{1/2}$, meaning $\mathcal{D}'=\{(M^{1/2}x_i,y_i)\}_{i\in\mathcal{I}}$ satisfies the definition of $\Delta$-fixed, as desired.
\end{proof}

\begin{definition}
The \textbf{contact vectors} of $\mathcal{D}$ are the shortest vectors in $\mathcal{Z}(\mathcal{D})$, when they exist.
\end{definition}

\begin{lemma}
\label{lem.fixed contact length}
If $\mathcal{D}$ is $\Delta$-fixed, then its contact vectors have length $\Delta$, provided they exist.
\end{lemma}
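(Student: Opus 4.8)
The plan is to combine Lemma~\ref{lem.ep.vectors} with a one-line scaling argument. Since $\mathcal{D}$ is $\Delta$-fixed, Lemma~\ref{lem.ep.vectors}(i) identifies $\arg\sqz(\mathcal{D},\Delta)$ as the singleton $\{\Pi\}$, where $\Pi$ is orthogonal projection onto $\sspan\{x_i\}_{i\in\mathcal{I}}$. Because $\mathcal{Z}(\mathcal{D})\subseteq\sspan\{x_i\}_{i\in\mathcal{I}}$, every $z\in\mathcal{Z}(\mathcal{D})$ satisfies $\Pi z=z$, so the feasibility constraint $z^\top\Pi z\geq\Delta^2$ becomes $\|z\|^2\geq\Delta^2$. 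In particular every contact vector has length at least $\Delta$, and it remains only to prove the matching upper bound.

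For that, let $\ell:=\min_{z\in\mathcal{Z}(\mathcal{D})}\|z\|$, which is attained (and equals the common length of the contact vectors) precisely because the contact vectors are assumed to exist; we have just shown $\ell\geq\Delta$. Suppose toward a contradiction that $\ell>\Delta$, and consider the scaled matrix $M:=(\Delta/\ell)^2\,\Pi$. Since $0<(\Delta/\ell)^2<1$ we have $0\preceq M\preceq\Pi\preceq I$, and for every $z\in\mathcal{Z}(\mathcal{D})$ we get $z^\top Mz=(\Delta/\ell)^2\|z\|^2\geq(\Delta/\ell)^2\ell^2=\Delta^2$, so $M$ is feasible in $\sqz(\mathcal{D},\Delta)$. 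But then $\tr M=(\Delta/\ell)^2\tr\Pi<\tr\Pi$, contradicting the optimality of $\Pi$. Hence $\ell=\Delta$.

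The only point needing care is the strict inequality $\tr\Pi>0$ in the last display, i.e.\ that $\sspan\{x_i\}_{i\in\mathcal{I}}\neq\{0\}$; this is where the ``provided they exist'' hypothesis is used. If the contact vectors exist then $\mathcal{Z}(\mathcal{D})$ is nonempty and, by the first paragraph, contains a vector of norm at least $\Delta>0$, so $\sspan\mathcal{Z}(\mathcal{D})\neq\{0\}$ and therefore $\sspan\{x_i\}_{i\in\mathcal{I}}\neq\{0\}$ (in fact these two spans coincide by Lemma~\ref{lem.ep.vectors}(ii), but only the containment is needed). I do not anticipate any real obstacle here: the scaling construction $M=(\Delta/\ell)^2\Pi$ is the whole idea, and the lone subtlety is this nonvacuousness check that makes the trace strictly decrease.
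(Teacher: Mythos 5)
Your proof is correct and follows essentially the same route as the paper: both use Lemma~\ref{lem.ep.vectors}(i) to pin $\Pi$ down as the unique optimizer, then scale $\Pi$ by $(\Delta/\ell)^2$ (the paper's $\alpha$) to obtain a cheaper feasible point and derive a contradiction. Your extra remark about $\tr\Pi>0$ is a small but legitimate nonvacuousness check that the paper leaves implicit.
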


\begin{proof}
Suppose $\mathcal{D}$ has a contact vector.
Since $\mathcal{D}$ is $\Delta$-fixed, $\sqz(\mathcal{D},\Delta)$ is feasible, and so this contact vector has length $\geq\Delta$.
Suppose the contact vector has length $\Delta_0>\Delta$, put $\alpha=(\Delta/\Delta_0)^2<1$, and select $\Pi\in\arg\sqz(\mathcal{D},\Delta)$ such that $\Pi^{1/2}x_i=x_i$ for every $i\in\mathcal{I}$.
By Lemma~\ref{lem.ep.vectors}(i), $\Pi$ is orthogonal projection onto $\sspan\{x_i\}_{i\in\mathcal{I}}$.
We will show that $M:=\alpha\Pi$ is feasible in $\sqz(\mathcal{D},\Delta)$ with smaller trace than $\Pi$, contradicting the fact that $\Pi$ is optimal in $\sqz(\mathcal{D},\Delta)$.
Since $\alpha\in(0,1)$, we have $0\preceq M\preceq I$.
Next, every $z\in\mathcal{Z}(\mathcal{D})$ satisfies
\[
z^\top Mz
=\alpha z^\top\Pi z
=\alpha z^\top z
\geq\Delta^2,
\]
where the last equality applies Lemma~\ref{lem.ep.vectors}(i) and the inequality uses the definition of $\alpha$.
Finally, $\tr M=\alpha \tr\Pi <\tr\Pi$, producing the desired contradiction.
\end{proof}

\begin{theorem}
\label{thm.contact fixed}
If $\mathcal{D}=\{(x_i,y_i)\}_{i\in\mathcal{I}}$ has contact vectors of length $\Delta$ that span $\sspan\{x_i\}_{i\in\mathcal{I}}$, then  $\mathcal{D}$ is $\Delta$-fixed.
Furthermore, the converse holds when $|\mathcal{Z}(\mathcal{D})|<\infty$.
\end{theorem}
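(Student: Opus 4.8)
The plan is to compare candidate feasible points of $\sqz(\mathcal{D},\Delta)$ against $\Pi$, the orthogonal projection onto $V:=\sspan\{x_i\}_{i\in\mathcal{I}}$, using the characterization in Lemma~\ref{lem.ep.vectors}(i).

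For the forward direction, suppose the contact vectors have length $\Delta$ and span $V$. First I would check $\Pi$ is feasible in $\sqz(\mathcal{D},\Delta)$: indeed $0\preceq\Pi\preceq I$, and every $z\in\mathcal{Z}(\mathcal{D})$ lies in $V$ and has $\|z\|\geq\Delta$ (the shortest vectors in $\mathcal{Z}(\mathcal{D})$ have length $\Delta$), so $z^\top\Pi z=\|z\|^2\geq\Delta^2$. Then I would show $\Pi$ is optimal: let $M$ be any feasible point. For a contact vector $z$, feasibility gives $z^\top Mz\geq\Delta^2=\|z\|^2$ while $M\preceq I$ gives $z^\top Mz\leq\|z\|^2$, hence $z^\top(I-M)z=0$; since $I-M\succeq0$ this forces $(I-M)z=0$, i.e., $Mz=z$. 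As the contact vectors span $V$, we get $Mv=v$ for all $v\in V$, so (since $M$ is self-adjoint) $V^\perp$ is $M$-invariant and $M$ is block diagonal, acting as the identity on $V$ and as some $0\preceq M'\preceq I$ on $V^\perp$; thus $\tr M=\dim V+\tr M'\geq\dim V=\tr\Pi$. Hence $\Pi\in\arg\sqz(\mathcal{D},\Delta)$, and since $\Pi^{1/2}=\Pi$ fixes each $x_i\in V$, $\mathcal{D}$ is $\Delta$-fixed by definition.

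For the converse, assume $\mathcal{D}$ is $\Delta$-fixed and $|\mathcal{Z}(\mathcal{D})|<\infty$. Then contact vectors exist, and by Lemma~\ref{lem.fixed contact length} they have length $\Delta$; it remains only to show they span $V$. Let $W$ be their span; since each contact vector is some $x_i-x_j$, we have $W\subseteq V$, and by Lemma~\ref{lem.ep.vectors}(i), $\Pi$ is optimal in $\sqz(\mathcal{D},\Delta)$. Supposing toward a contradiction that $W\subsetneq V$, I would construct a feasible point of strictly smaller trace: let $\Pi_W$ be orthogonal projection onto $W$ and set $M_\epsilon:=\Pi_W+(1-\epsilon)(\Pi-\Pi_W)$. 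For $\epsilon\in(0,1)$ this satisfies $0\preceq M_\epsilon\preceq\Pi\preceq I$ and $\tr M_\epsilon=\dim W+(1-\epsilon)(\dim V-\dim W)<\dim V$. Writing $z=\Pi_W z+(\Pi-\Pi_W)z$ for $z\in\mathcal{Z}(\mathcal{D})\subseteq V$, one computes $z^\top M_\epsilon z=\|z\|^2-\epsilon\|(\Pi-\Pi_W)z\|^2$; for contact vectors $(\Pi-\Pi_W)z=0$ and $\|z\|=\Delta$, so this equals $\Delta^2$, while for the finitely many non-contact $z$ we have $\|z\|>\Delta$, so $\|z\|^2-\Delta^2>0$ and the constraint holds once $\epsilon>0$ is at most the minimum of $(\|z\|^2-\Delta^2)/\|(\Pi-\Pi_W)z\|^2$ over those non-contact $z$ with $(\Pi-\Pi_W)z\neq0$. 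This produces a feasible $M_\epsilon$ with $\tr M_\epsilon<\tr\Pi$, contradicting the optimality of $\Pi$; hence $W=V$.

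The main obstacle is the converse: the work is in (a) realizing one only needs to rule out $W\subsetneq V$ by exhibiting a cheaper feasible operator, (b) choosing the right operator—namely shrinking $\Pi$ only in the directions of $V$ orthogonal to the contact vectors, so that the contact constraints remain tight—and (c) using finiteness of $\mathcal{Z}(\mathcal{D})$ to extract a uniform positive gap $\|z\|^2-\Delta^2$ over the non-contact vectors, which is precisely what allows $\epsilon>0$ to be chosen and explains why the converse need not hold in the semi-infinite case.
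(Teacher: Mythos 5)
Your proof is correct and takes essentially the same route as the paper: the forward direction hinges on the same sandwich $\Delta^2\le z^\top Mz\le\|z\|^2=\Delta^2$ forcing $M$ to act as the identity on the contact vectors, and the converse uses the same construction (your $M_\epsilon=\Pi_W+(1-\epsilon)(\Pi-\Pi_W)$ is the paper's $\Pi_S+\alpha\Pi_T$ with $\alpha=1-\epsilon$), exploiting finiteness of $\mathcal{Z}(\mathcal{D})$ to pick a positive shrinkage that keeps the non-contact constraints satisfied. The only cosmetic differences are that you verify feasibility and optimality of $\Pi$ explicitly in the forward direction rather than starting from an arbitrary optimizer, and you work with $\operatorname{span}\{x_i\}$ directly instead of passing through Lemma~\ref{lem.ep.vectors}(ii) to $\operatorname{span}\mathcal{Z}(\mathcal{D})$.
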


See Figure~\ref{figure} for (necessarily infinite) examples in which the converse fails to hold.

\begin{figure}
\begin{center}
\includegraphics[height=0.35\textwidth]{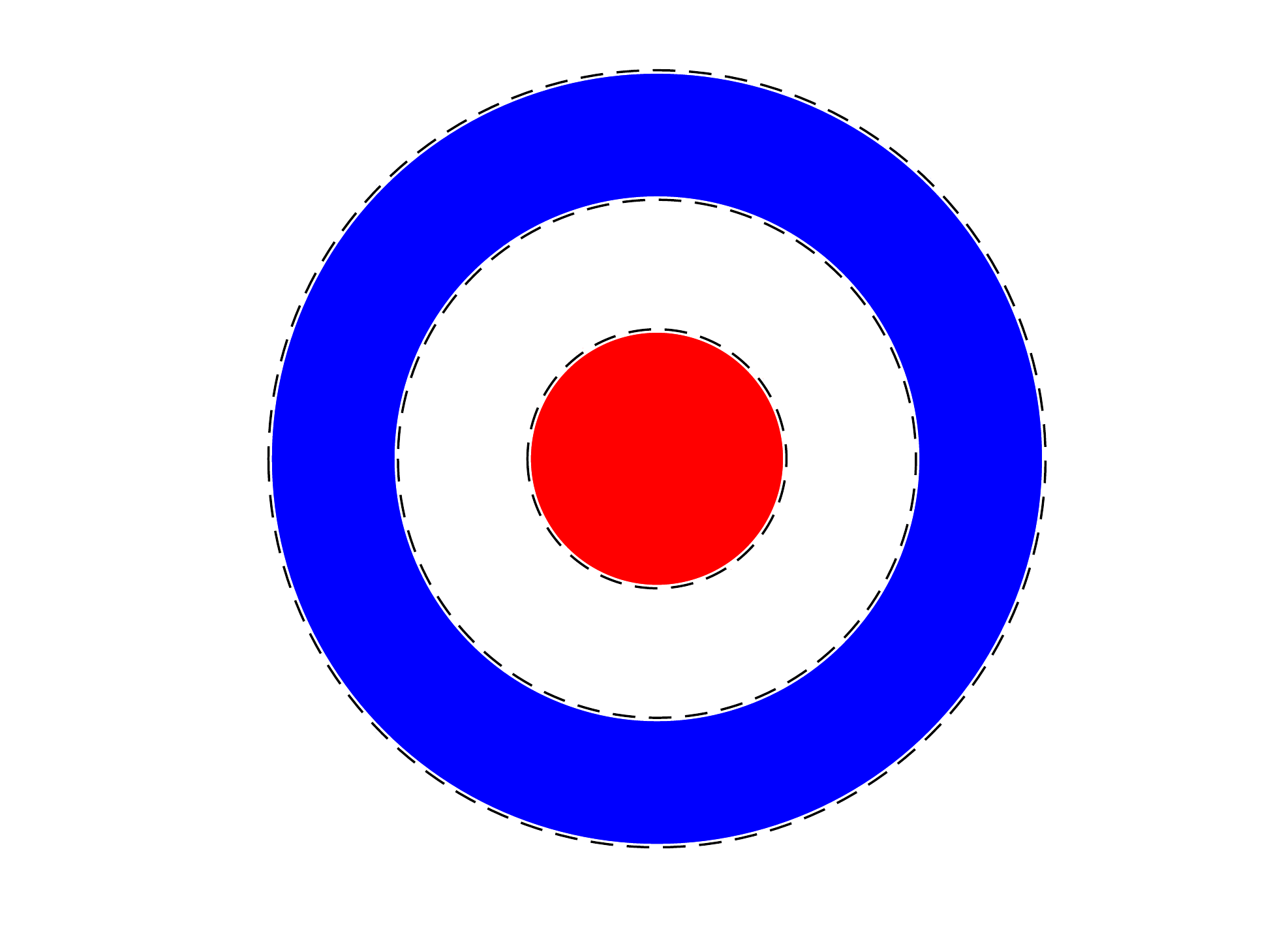}
\includegraphics[height=0.35\textwidth]{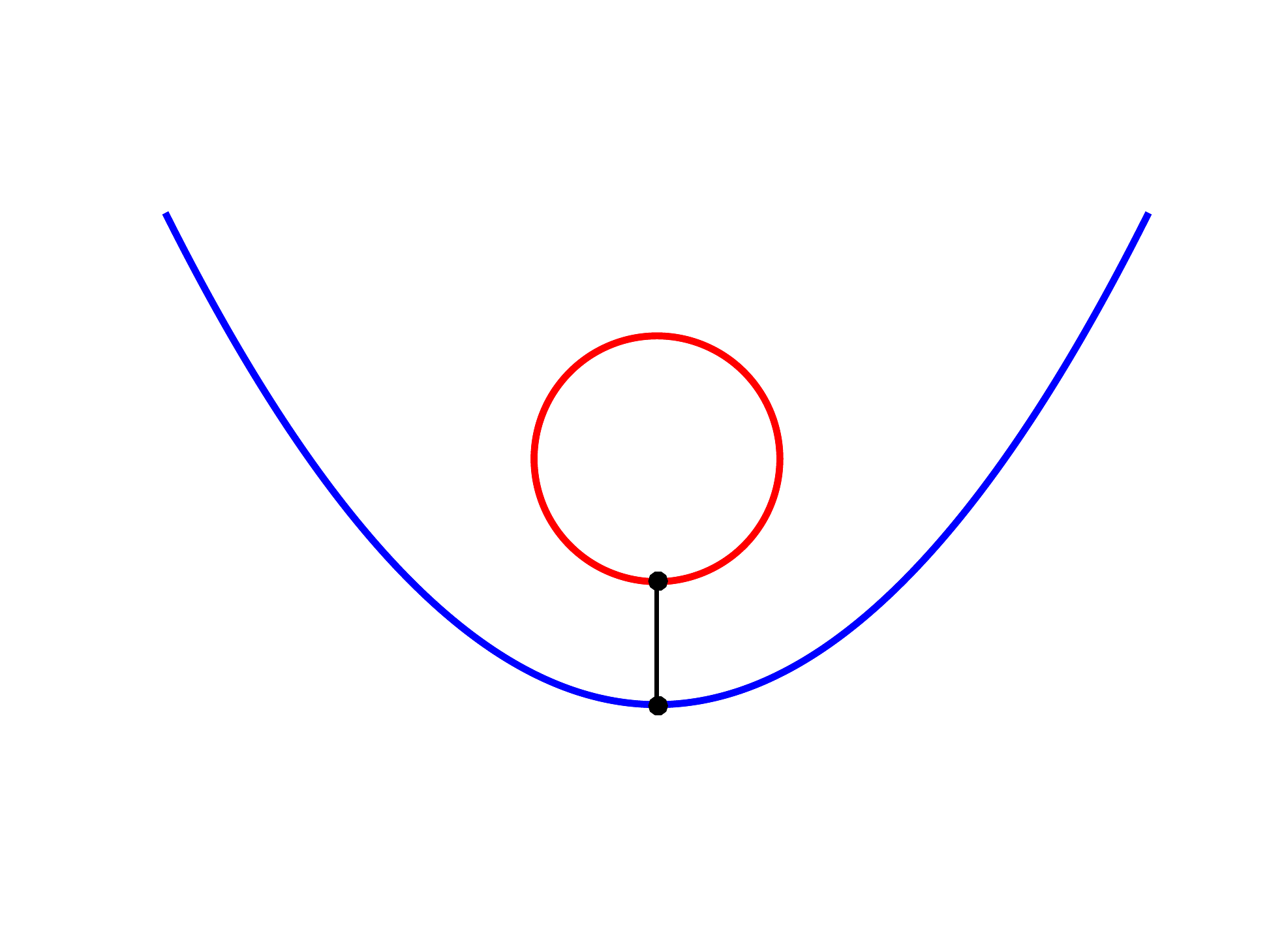}
\end{center}
\caption{\label{figure}
Examples of $\Delta$-fixed data that fail to admit a spanning set of contact vectors.
Both examples exhibit $k=2$ classes in $\mathbb{R}^2$.
On the left, the classes are open sets, and they admit no contact vectors.
On the right, the classes are compact sets, and the only contact vectors are $(0,\pm1)$.
These examples illustrate why Theorem~\ref{thm.contact fixed} requires $|\mathcal{Z}(\mathcal{D})|<\infty$ for the converse to hold.
By Theorem~\ref{thm.strong duality fixed contact span}, SqueezeFit fails to admit a Haar dual certificate in these examples.}
\end{figure}

\begin{proof}[Proof of Theorem~\ref{thm.contact fixed}]
Pick any $M\in\arg\sqz(\mathcal{D},\Delta)$.
Then for every contact vector $z$ of $\mathcal{D}$, we have
\[
\Delta^2
\leq z^\top Mz
\leq z^\top z
= \Delta^2.
\]
Considering the far left- and right-hand sides, all inequalities are necessarily equalities.
In particular, equality in the second inequality combined with $M\preceq I$ implies that $M$ has a leading eigenvalue of $1$ whose eigenspace contains every contact vector, and therefore every $x_i$ (by assumption).
This implies that $\mathcal{D}$ satisfies the definition of $\Delta$-fixed.

For the converse, suppose $|\mathcal{Z}(\mathcal{D})|<\infty$ and $\mathcal{D}$ is $\Delta$-fixed.
Since $|\mathcal{Z}(\mathcal{D})|<\infty$, $\mathcal{D}$ necessarily has a contact vector.
By Lemma~\ref{lem.fixed contact length}, the contact vectors of $\mathcal{D}$ necessarily have length $\Delta$.
Let $S$ denote the span of these contact vectors.
We seek to prove $S=\sspan\{x_i\}_{i\in\mathcal{I}}$, and by Lemma~\ref{lem.ep.vectors}(ii), it is equivalent to show $S=\sspan\mathcal{Z}(\mathcal{D})$.
Since $S\subseteq\sspan\mathcal{Z}(\mathcal{D})$, it suffices to show $\sspan\mathcal{Z}(\mathcal{D})\subseteq S$.
To this end, consider the set $A:=\{\|z\|:z\in\mathcal{Z}(\mathcal{D})\setminus S\}\subseteq(\Delta,\infty)$.
If $A$ is empty, then we are done, since this implies $\mathcal{Z}(\mathcal{D})\subseteq S$.
Otherwise, since $|\mathcal{Z}(\mathcal{D})|<\infty$, we have $\alpha:=(\Delta/(\min A))^2<1$.
Let $\Pi_S$ and $\Pi_T$ denote orthogonal projection onto $S$ and $T:=S^\perp\cap\sspan\{x_i\}_{i\in\mathcal{I}}$, respectively, and select $\Pi\in\arg\sqz(\mathcal{D},\Delta)$.
We will show that $M:=\Pi_S+\alpha\Pi_T$ is feasible in $\sqz(\mathcal{D},\Delta)$ with smaller trace than $\Pi$, contradicting the fact that $\Pi$ is optimal in $\sqz(\mathcal{D},\Delta)$.

Since $\alpha\in(0,1)$, we have $0\preceq M\preceq I$.
Next, we will verify $z^\top Mz\geq\Delta^2$ for every $z\in\mathcal{Z}(\mathcal{D})$.
For $z\in\mathcal{Z}(\mathcal{D})\cap S$, we have
\[
z^\top Mz
=z^\top(\Pi_S+\alpha\Pi_T)z
=z^\top \Pi_Sz
=z^\top z
\geq\Delta^2.
\]
Meanwhile, for $z\in\mathcal{Z}(\mathcal{D})\setminus S$, we have 
\[
z^\top Mz
=z^\top(\Pi_S+\alpha\Pi_T)z
\geq\alpha z^\top(\Pi_S+\Pi_T) z
\stackrel{\text{(a)}}{=}\alpha z^\top\Pi z
=\alpha\|\Pi z\|^2
\stackrel{\text{(b)}}{=}\alpha\|z\|^2
\geq\Delta^2,
\]
where (a) and (b) follow from Lemma~\ref{lem.ep.vectors}(i), and the final inequality follows from the definition of $\alpha$.
Overall, $M$ is feasible in $\sqz(\mathcal{D},\Delta)$, and yet
\[
\tr M
=\tr \Pi_S + \alpha \tr \Pi_T
<\tr \Pi_S + \tr \Pi_T
=\tr(\Pi_S+\Pi_T)
=\tr\Pi,
\]
where the last equality again follows from Lemma~\ref{lem.ep.vectors}(i).
This is the desired contradiction.
\end{proof}

\subsection{Conditions for strong duality}

We follow \cite{semi-infinite} to find the Haar dual program of $\sqz(\mathcal{D},\Delta)$ in the case where $\mathcal{Z}(\mathcal{D})$ is infinite:
\begin{align}
\tag{$\dual(\mathcal{D},\Delta)$}
\sup \quad & \Delta^2\sum_{z\in\mathcal{Z}(\mathcal{D})}\gamma(z)-\tr Y\\ \nonumber\text{subject to} \quad & \sum_{z\in\mathcal{Z}(\mathcal{D})}\gamma(z)zz^\top-Y\preceq I, \quad Y\succeq0, \quad \gamma\geq0, \quad |\operatorname{supp}(\gamma)|<\infty
\end{align}
Here, the decision variables are $\gamma\colon\mathcal{Z}(\mathcal{D})\to\mathbb{R}$ and $Y\in\mathbb{R}^{d\times d}$.
The above program reduces to the dual semidefinite program when $\mathcal{Z}(\mathcal{D})$ is finite.
In either case, weak duality gives that $\val\sqz(\mathcal{D},\Delta)\geq\val\dual(\mathcal{D},\Delta)$ when $\sqz(\mathcal{D},\Delta)$ is feasible.
We are interested in when $\sqz(\mathcal{D},\Delta)$ admits a \textbf{Haar dual certificate}, that is, a maximizer of $\dual(\mathcal{D},\Delta)$.
Indeed, a Haar dual certificate certifies the optimality of a given optimal point in $\sqz(\mathcal{D},\Delta)$ by witnessing equality in weak duality.
In the finite case, we can prove the existence of a Haar dual certificate by manipulating $\sqz(\mathcal{D},\Delta)$ and applying Slater's condition:

\begin{theorem}
\label{thm.finite sd}
Suppose $|\mathcal{Z}(\mathcal{D})|<\infty$ and $\sqz(\mathcal{D},\Delta)$ is feasible.
Then $\sqz(\mathcal{D},\Delta)$ admits a Haar dual certificate.
\end{theorem}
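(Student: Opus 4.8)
The plan is to cut $\sqz(\mathcal D,\Delta)$ down to a strictly feasible subproblem, apply conic duality there, and lift the resulting certificate back. Strict feasibility fails for $\sqz(\mathcal D,\Delta)$ itself for the reason already met in the proof of Theorem~\ref{thm.contact fixed}: if some $z\in\mathcal Z(\mathcal D)$ has $\|z\|=\Delta$, then every feasible $M$ obeys $\Delta^2\le z^\top Mz\le\|z\|^2=\Delta^2$, forcing $z^\top(I-M)z=0$ and hence $Mz=z$, so $M$ has an eigenvalue $1$ and $M\prec I$ is impossible. I would therefore quarantine these directions. Feasibility of $\sqz(\mathcal D,\Delta)$ forces $\|z\|\ge\Delta$ for every $z\in\mathcal Z(\mathcal D)$, so put $C:=\{z\in\mathcal Z(\mathcal D):\|z\|=\Delta\}$ and $S:=\sspan C$. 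By the computation above every feasible $M$ fixes $S$ pointwise and, being symmetric, leaves $S^\perp$ invariant; thus $M=\Pi_S+\Pi_{S^\perp}M\Pi_{S^\perp}$ with $0\preceq\Pi_{S^\perp}M\Pi_{S^\perp}\preceq\Pi_{S^\perp}$. Writing $N:=\Pi_{S^\perp}M\Pi_{S^\perp}$, the program $\sqz(\mathcal D,\Delta)$ is equivalent to minimizing $\dim S+\tr N$ over symmetric $N$ supported on $S^\perp$ subject to $0\preceq N\preceq\Pi_{S^\perp}$ and $(\Pi_{S^\perp}z)^\top N(\Pi_{S^\perp}z)\ge\Delta^2-\|\Pi_S z\|^2$ for all $z\in\mathcal Z(\mathcal D)$; call this $\sqz'$. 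The constraints indexed by $z\in C$ read $0\ge 0$ and are vacuous, and every surviving index has $\|z\|>\Delta$, so its right-hand side is strictly below $\|z\|^2-\|\Pi_Sz\|^2=\|\Pi_{S^\perp}z\|^2$.

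Now $\sqz'$ satisfies Slater's condition. Since $\mathcal Z(\mathcal D)$ is finite and each non-vacuous constraint has right-hand side strictly below $\|\Pi_{S^\perp}z\|^2=(\Pi_{S^\perp}z)^\top\Pi_{S^\perp}(\Pi_{S^\perp}z)$, there is a scalar $c\in(0,1)$ for which $N=c\,\Pi_{S^\perp}$ is feasible in $\sqz'$ with $0\prec N\prec\Pi_{S^\perp}$ on $S^\perp$ and all linear inequalities strict. The objective is bounded below, so $\sqz'$ has zero duality gap and its finite semidefinite dual is attained; fix a dual optimizer $(\gamma^\star,Y^\star)$, which (zeroing $\gamma^\star$ on $\{z:\Pi_{S^\perp}z=0\}$) we may take supported off $S$, with $\val\sqz'=\Delta^2\sum_z\gamma^\star(z)-\|\Pi_S z\|^2$-corrected value minus $\tr Y^\star$.

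It remains to lift $(\gamma^\star,Y^\star)$ to a maximizer of $\dual(\mathcal D,\Delta)$. I would keep $\gamma=\gamma^\star$ on $\mathcal Z(\mathcal D)\setminus C$, put additional multipliers $\mu(z)\ge 0$ on the contact vectors $z\in C$ (which augment $\sum_z\gamma(z)zz^\top$ only by a positive semidefinite term supported on $S$, of trace $\Delta^2\sum_{z\in C}\mu(z)$), and take $Y$ to be $Y^\star$ plus the $S$-diagonal and $S$--$S^\perp$ off-diagonal blocks of $\sum_z\gamma(z)zz^\top$ minus $\Pi_S$, so that $\sum_z\gamma(z)zz^\top-Y$ is block-diagonal with blocks $\Pi_S$ on $S$ and $\Pi_{S^\perp}\big(\sum_z\gamma(z)zz^\top\big)\Pi_{S^\perp}-Y^\star\preceq\Pi_{S^\perp}$ on $S^\perp$; this yields $\sum_z\gamma(z)zz^\top-Y\preceq I$ and, for the $\mu(z)$ chosen large enough, $Y\succeq 0$, while the objective telescopes to $\dim S+\val\sqz'=\val\sqz(\mathcal D,\Delta)$, so weak duality forces $(\gamma,Y)$ to be optimal. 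The main obstacle is exactly this last step: the reduced dual only remembers the compressions $\Pi_{S^\perp}zz^\top\Pi_{S^\perp}$, so reinstating the full $zz^\top$ brings back $S$--$S^\perp$ cross terms that must be cancelled while simultaneously keeping $Y\succeq 0$ and the optimal value unchanged, and confirming that the large contact-vector multipliers accomplish this (and hence why $|\mathcal Z(\mathcal D)|<\infty$ is essential) is where the genuine work lies. As a safety net, the value identity $\val\dual(\mathcal D,\Delta)=\val\sqz(\mathcal D,\Delta)$ can also be obtained directly by applying Sion's minimax theorem to $f(\gamma,P)=\tr P+\sum_z\gamma(z)(\Delta^2-z^\top Pz)$ over $\gamma\ge 0$ and the compact set $\{0\preceq P\preceq I\}$, which leaves only attainment of a maximizing $\gamma$ to be argued.
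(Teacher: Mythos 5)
Your approach --- quarantine the contact-length vectors into $S$ (the paper's $T$), reduce to an equivalent strictly feasible program on $S^\perp$, verify Slater's condition there, and lift the resulting dual certificate back to $\dual(\mathcal{D},\Delta)$ --- is precisely the paper's route, and your Slater argument matches the paper's. The proposal is, however, incomplete on the last step, and you say so yourself: ``confirming that the large contact-vector multipliers accomplish this \ldots\ is where the genuine work lies.'' That lift is the technical content of the theorem, so this is a genuine gap rather than a minor loose end. The paper's construction is explicit: with $Z:=\sum_{z\in\mathcal{Z}_1}zz^\top$ and $\lambda$ its smallest nonzero eigenvalue, it puts $\gamma(z)=1/\lambda$ on $\mathcal{Z}_1$, keeps $\gamma=\tilde\gamma$ on $\mathcal{Z}_2$, and takes
\[
Y:=\sum_{z\in\mathcal{Z}_2}\tilde\gamma(z)(\Pi_Tz)(\Pi_Tz)^\top+\tfrac{1}{\lambda}Z-\Pi_T+\tilde Y,
\]
which is positive semidefinite because $\tfrac{1}{\lambda}Z\succeq\Pi_T$; it then checks that $R:=I-\sum_z\gamma(z)zz^\top+Y$ simplifies to $\Pi_{T^\perp}-\sum_{z\in\mathcal{Z}_2}\tilde\gamma(z)\big(zz^\top-(\Pi_Tz)(\Pi_Tz)^\top\big)+\tilde Y$, with vanishing $T$-block and $T^\perp$-block governed by reduced dual feasibility, and that the dual value telescopes to $\tr M$. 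Note this is not a matter of ``multipliers chosen large enough'': $1/\lambda$ is the calibrated minimum making $\tfrac{1}{\lambda}Z-\Pi_T\succeq 0$, and positivity of $Y$ comes from that spectral bound, not sheer magnitude. Finally, your Sion's-minimax fallback does give $\val\dual(\mathcal{D},\Delta)=\val\sqz(\mathcal{D},\Delta)$, but as you yourself observe it does not yield attainment, which is exactly what a Haar dual certificate is; it therefore does not close the gap either.
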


\begin{proof}
Put $\mathcal{Z}_1=\{z\in\mathcal{Z}(\mathcal{D}):\|z\|=\Delta\}$, $\mathcal{Z}_2=\mathcal{Z}(\mathcal{D})\setminus\mathcal{Z}_1$, and $T=\sspan\{z\}_{z\in\mathcal{Z}_1}$, and consider the related semidefinite program:
\begin{align}
\label{eq.newprimal}
\min\quad & \tr(\Pi_{T^\perp}X\Pi_{T^\perp})\\
\nonumber
\text{subject to}\quad & z^\top \Pi_{T^\perp}X\Pi_{T^\perp} z\geq \Delta^2-\|\Pi_Tz\|^2~~ \forall z\in\mathcal{Z}_2, \quad \Pi_TX\Pi_T=0, \quad 0\preceq X\preceq I
\end{align}
Importantly, every feasible point $X$ of \eqref{eq.newprimal} can be transformed into a feasible point $M=\Pi_{T^\perp} X\Pi_{T^\perp}+\Pi_T$ of $\sqz(\mathcal{D},\Delta)$.
Indeed, if $z\in\mathcal{Z}_2$, then
\[
z^\top(\Pi_{T^\perp} X\Pi_{T^\perp}+\Pi_T)z
=z^\top\Pi_{T^\perp}X\Pi_{T^\perp}z+\|\Pi_Tz\|^2
\geq \epsilon,
\]
while if $z\in\mathcal{Z}_1$, then $z\in T$, and so $z^\top(\Pi_{T^\perp} X\Pi_{T^\perp}+\Pi_T)z=\|z\|^2=\epsilon$.
Furthermore, $0\preceq \Pi_{T^\perp} X\Pi_{T^\perp}+\Pi_T\preceq \Pi_{T^\perp}+\Pi_T=I$.

Next, we demonstrate that the related program \eqref{eq.newprimal} satisfies strong duality by Slater's theorem.
To see this, define $\alpha=\frac{1}{2}+\frac{1}{2\Delta^2}\min\{\|z\|^2:z\in\mathcal{Z}_2\}$.
Then $1<\alpha<\|z\|^2/\Delta^2$ for every $z\in\mathcal{Z}_2$.
Consider $X_0=\frac{1}{\alpha}\Pi_{T^\perp}$.
Then for every $z\in\mathcal{Z}_2$, we have
\[
z^\top\Pi_{T^\perp} X_0\Pi_{T^\perp}z
=\frac{1}{\alpha}\|\Pi_{T^\perp}z\|^2
=\frac{1}{\alpha}\|z\|^2-\frac{1}{\alpha}\|\Pi_{T}z\|^2
>\Delta^2-\|\Pi_{T}z\|^2.
\]
Furthermore, $X_0$ lies in the relative interior of $\{X:\Pi_{T}X\Pi_{T}=0,0\preceq X\preceq I\}$.
Overall, $X_0$ lies in the relative interior of the feasibility region of \eqref{eq.newprimal}, and so Slater's theorem~\cite{BoydV:04} implies that the value of \eqref{eq.newprimal} equals the value of its dual:
\begin{align}
\label{eq.newdual}
\max\quad&\sum_{z\in\mathcal{Z}_2}(\Delta^2-\|\Pi_Tz\|^2)\tilde{\gamma}(z)-\operatorname{tr}\tilde{Y}\\
\nonumber
\text{subject to}\quad&\sum_{z\in\mathcal{Z}_2}\tilde{\gamma}(z)(\Pi_{T^\perp}z)(\Pi_{T^\perp}z)^\top-\tilde{Y}\preceq \Pi_{T^\perp}, \quad \Pi_T\tilde{Y}\Pi_T=0,\quad \tilde{Y}\succeq0, \quad \tilde{\gamma}\geq0
\end{align}
Here, the decision variables are $\tilde\gamma\colon\mathcal{Z}_2\to\mathbb{R}$ and $\tilde{Y}\in\mathbb{R}^{d\times d}$.

Next, we demonstrate how to transform every feasible point $(\tilde{\gamma},\tilde{Y})$ of \eqref{eq.newdual} into a feasible point $(\gamma,Y)$ of $\dual(\mathcal{D},\Delta)$.
To this end, define $Z=\sum_{z\in\mathcal{Z}_1}zz^\top$, and let $\lambda$ denote the smallest nonzero eigenvalue of $Z$.
Then we take
\[
\gamma(z)
:=\left\{\begin{array}{ll}
1/\lambda&\text{if }z\in\mathcal{Z}_1\\
\tilde\gamma(z)&\text{if }z\in\mathcal{Z}_2
\end{array}\right\},
\qquad
Y:=\sum_{z\in\mathcal{Z}_2}\tilde{\gamma}(z)(\Pi_Tz)(\Pi_Tz)^\top+\frac{1}{\lambda}Z-\Pi_T+\tilde{Y}.
\]
Then we immediately have $\gamma\geq0$ and $Y\succeq0$.
Furthermore,
\[
R
:=I-\sum_{z\in\mathcal{Z}(\mathcal{D})}\gamma(z)zz^\top+Y
=\Pi_{T^\perp}-\sum_{z\in\mathcal{Z}_2}\tilde{\gamma}(z)\Big(zz^\top-(\Pi_Tz)(\Pi_Tz)^\top\Big)+\tilde{Y}
\]
satisfies $\Pi_TR\Pi_T=0$ and $\Pi_{T^\perp}R\Pi_{T^\perp}\succeq0$ by feasibility in \eqref{eq.newdual}, and so $R\succeq0$, as desired.

By assumption, $\sqz(\mathcal{D},\Delta)$ is feasible, and so $\Pi_{T^\perp}$ is feasible in \eqref{eq.newprimal}; also, $(\tilde\gamma,\tilde{y})=(0,0)$ is feasible in \eqref{eq.newdual}.
By this feasibility and the extreme value theorem, we may take optimizers $X^\natural$ and $(\tilde{\gamma}^\natural,\tilde{Y}^\natural)$ of \eqref{eq.newprimal} and \eqref{eq.newdual}, respectively, and let $M$ and $(\gamma,Y)$ denote the corresponding feasible points of $\sqz(\mathcal{D},\Delta)$ and $\dual(\mathcal{D},\Delta)$, respectively.
Then the dual value of $(\alpha,Y)$ equals the primal value of $M$:
\begin{align*}
\Delta^2\sum_{z\in\mathcal{Z}(\mathcal{D})}\gamma(z)-\operatorname{tr}Y
&=\Delta^2\sum_{z\in\mathcal{Z}_2}\tilde{\gamma}(z)+\frac{\Delta^2}{\lambda}|\mathcal{Z}_1|-\bigg(\sum_{z\in\mathcal{Z}_2}\tilde{\gamma}^\natural(z)\|\Pi_Tz\|^2+\frac{\Delta^2}{\lambda}|\mathcal{Z}_1|-\operatorname{dim}T+\operatorname{tr}\tilde{Y}^\natural\bigg)\\
&=\operatorname{val}\eqref{eq.newdual}+\operatorname{dim}T
=\operatorname{val}\eqref{eq.newprimal}+\operatorname{dim}T
=\operatorname{tr}(\Pi_{T^\perp}X^\natural\Pi_{T^\perp})+\operatorname{tr}\Pi_T
=\operatorname{tr}M.
\end{align*}
Combining this with weak duality gives
\[
\val\dual(\mathcal{D},\Delta)
\geq \Delta^2\sum_{z\in\mathcal{Z}(\mathcal{D})}\gamma(z)-\operatorname{tr}Y
=\operatorname{tr}M
\geq\val\sqz(\mathcal{D},\Delta)
\geq\val\dual(\mathcal{D},\Delta).
\]
Considering the far left- and right-hand sides, we may conclude the desired equality.
\end{proof}

\begin{theorem}
\label{thm.strong duality fixed contact span}
Suppose $\mathcal{D}=\{(x_i,y_i)\}_{i\in\mathcal{I}}$ is $\Delta$-fixed.
Then $\sqz(\mathcal{D},\Delta)$ admits a Haar dual certificate if and only if the contact vectors of $\mathcal{D}$ span $\sspan\{x_i\}_{i\in\mathcal{I}}$.
\end{theorem}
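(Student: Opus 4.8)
The plan rests on the structural facts already established. Since $\mathcal{D}$ is $\Delta$-fixed, Lemma~\ref{lem.ep.vectors}(i) identifies the unique optimizer of $\sqz(\mathcal{D},\Delta)$ as $\Pi$, orthogonal projection onto $T:=\sspan\{x_i\}_{i\in\mathcal{I}}$, so that $\val\sqz(\mathcal{D},\Delta)=\tr\Pi=\dim T$. Two elementary observations will be used repeatedly: (a) each $z\in\mathcal{Z}(\mathcal{D})$ is a difference of two vectors in $\sspan\{x_i\}_{i\in\mathcal{I}}$, hence $z\in T$ and $\Pi z=z$; and (b) feasibility of $\sqz(\mathcal{D},\Delta)$ gives $\|z\|^2=z^\top z\geq z^\top Mz\geq\Delta^2$ for every $z\in\mathcal{Z}(\mathcal{D})$, so by Lemma~\ref{lem.fixed contact length} the contact vectors of $\mathcal{D}$ are precisely the $z\in\mathcal{Z}(\mathcal{D})$ with $\|z\|=\Delta$ (whenever any exist).

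For the ``if'' direction, suppose the contact vectors span $T$, and fix finitely many of them $z_1,\dots,z_m$ that still span $T$. Set $G:=\sum_{\ell=1}^m z_\ell z_\ell^\top$, and let $\lambda>0$ be the smallest nonzero eigenvalue of $G$, which is nonzero because $\operatorname{range}G=\sspan\{z_\ell\}_\ell=T$. Then $\tfrac1\lambda G\succeq\Pi$, since both matrices vanish on $T^\perp$ while $\tfrac1\lambda G$ has all eigenvalues $\geq1$ on $T$. I would propose the dual point given by $\gamma\equiv\tfrac1\lambda$ on $\{z_1,\dots,z_m\}$ and $\gamma\equiv0$ elsewhere, together with $Y:=\tfrac1\lambda G-\Pi\succeq0$. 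Then $\sum_{z}\gamma(z)zz^\top-Y=\tfrac1\lambda G-\bigl(\tfrac1\lambda G-\Pi\bigr)=\Pi\preceq I$, so $(\gamma,Y)$ is feasible in $\dual(\mathcal{D},\Delta)$; and since $\|z_\ell\|=\Delta$ forces $\tr G=m\Delta^2$, its objective value is $\Delta^2\cdot\tfrac m\lambda-\tr Y=\tfrac{m\Delta^2}{\lambda}-\tfrac1\lambda\tr G+\tr\Pi=\tr\Pi=\val\sqz(\mathcal{D},\Delta)$. By weak duality this makes $(\gamma,Y)$ a maximizer of $\dual(\mathcal{D},\Delta)$, hence a Haar dual certificate.

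For the ``only if'' direction, let $(\gamma,Y)$ be a Haar dual certificate; its objective value equals $\val\sqz(\mathcal{D},\Delta)=\tr\Pi$. The plan is to plug $M=\Pi$ into the weak-duality identity, which expresses $\tr\Pi-\bigl(\Delta^2\sum_{z}\gamma(z)-\tr Y\bigr)$ as the sum of the three manifestly nonnegative quantities $\sum_{z}\gamma(z)\bigl(z^\top\Pi z-\Delta^2\bigr)$, $\tr\bigl(Y(I-\Pi)\bigr)$, and $\tr\bigl((I-\sum_{z}\gamma(z)zz^\top+Y)\Pi\bigr)$. As the left-hand side is $0$, each summand is $0$, and I would extract: from the first (with $z^\top\Pi z=\|z\|^2$ by (a)) that $\gamma$ is supported on contact vectors; from the second, using $Y,I-\Pi\succeq0$, that $Y(I-\Pi)=0$, that is, $Y=\Pi Y\Pi$; from the third, using dual feasibility and $\Pi\succeq0$, that $(I-\sum_{z}\gamma(z)zz^\top+Y)\Pi=0$. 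Since $zz^\top\Pi=zz^\top$ for $z\in\operatorname{supp}\gamma$ by (a), and $Y\Pi=Y$, this last identity rearranges to $\sum_{z}\gamma(z)zz^\top=\Pi+Y$. Because $Y\succeq0$, the range of $\sum_{z}\gamma(z)zz^\top$ contains $\operatorname{range}\Pi=T$, yet it is also contained in the span of the vectors in $\operatorname{supp}\gamma$, all of which are contact vectors; therefore the contact vectors span $T$.

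The work here is less about a single hard step than about careful bookkeeping: one must keep track that $\operatorname{supp}\gamma$, the matrix $Y$, and the relevant rank-one terms $zz^\top$ all live inside $T$, because that is precisely what turns the operator identity $\sum_{z}\gamma(z)zz^\top=\Pi+Y$ into the spanning conclusion. The one point worth flagging is the reading of ``Haar dual certificate'': the ``only if'' argument uses that such a certificate attains objective value exactly $\val\sqz(\mathcal{D},\Delta)$, that is, it witnesses equality in weak duality, which is the sense in which it certifies optimality of $\Pi$.
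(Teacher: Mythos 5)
Your proposal is correct, and the $(\Leftarrow)$ direction is essentially identical to the paper's (same dual point built from a finite spanning set of contact vectors, same $\lambda$, same $Y=\tfrac1\lambda G-\Pi$). The $(\Rightarrow)$ direction is the same complementary-slackness argument, arriving at the key identity $\sum_z\gamma(z)zz^\top=\Pi+Y$ and concluding via range containment, but you package it slightly differently, and arguably more cleanly: the paper first performs a ``without loss of generality'' reduction, replacing $Y$ by $\Pi_\gamma Y\Pi_\gamma$ (where $\Pi_\gamma$ projects onto the column space of $\sum_z\gamma(z)zz^\top$) so that $\operatorname{Col}(Y)\subseteq\operatorname{Col}(\Pi)$, and uses this to strengthen the dual feasibility constraint to $A:=\Pi-\sum_z\gamma(z)zz^\top+Y\succeq0$ before a trace sandwich gives $A=0$. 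You instead decompose the duality gap at $M=\Pi$ into three individually nonnegative terms; the vanishing of the second term hands you $Y(I-\Pi)=0$, i.e.\ $Y=\Pi Y\Pi$, directly, without ever needing to modify the certificate. The two arguments are equivalent in content (the paper's $\tr A$ is exactly your second and third pieces combined), but your version avoids the preliminary reduction and is somewhat shorter to state.
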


\begin{proof}
($\Leftarrow$)
Select any finite collection $\mathcal{Z}_0$ of contact vectors of $\mathcal{D}$ that span $\sspan\{x_i\}_{i\in\mathcal{I}}$, and put $X:=\sum_{z\in\mathcal{Z}_0}zz^\top$.
Let $\lambda$ be the smallest non-zero eigenvalue of $X$, and define
\[
\gamma(z)
:=\left\{\begin{array}{cl}
1/\lambda&\text{if }z\in\mathcal{Z}_0\\
0&\text{if }z\in\mathcal{Z}(\mathcal{D})\setminus\mathcal{Z}_0
\end{array}\right\},
\qquad
Y:=\frac{1}{\lambda}X-\Pi,
\]
where $\Pi$ denotes orthogonal projection onto $\sspan\{x_i\}_{i\in\mathcal{I}}$.
It is straightforward to check that $(\gamma,Y)$ is feasible in $\dual(\mathcal{D},\Delta)$ with objective value $\tr\Pi$.
By Lemma~\ref{lem.ep.vectors}(i) and weak duality, $(\gamma,Y)$ is therefore a Haar dual certificate.

($\Rightarrow$)
Let $\Pi_\gamma$ denote orthogonal projection onto the column space of $\sum_{z\in\mathcal{Z}(\mathcal{D})}\gamma(z)zz^\top$.
We first claim that if $(\gamma,Y)$ is feasible in $\dual(\mathcal{D},\Delta)$, then so is $(\gamma,\Pi_\gamma Y\Pi_\gamma)$, and with monotonically larger objective value.
Indeed, feasibility follows from
\[
I-\sum_{z\in\mathcal{Z}(\mathcal{D})}\gamma(z)zz^\top+\Pi_\gamma Y\Pi_\gamma
=(I-\Pi_\gamma)+\Pi_\gamma\bigg(I-\sum_{z\in\mathcal{Z}(\mathcal{D})}\gamma(z)zz^\top+Y\bigg)\Pi_\gamma
\succeq 0,
\]
and the objective value is monotonically larger since $\tr(\Pi_\gamma Y\Pi_\gamma)=\tr(Y\Pi_\gamma)\leq\tr(Y)$, where the last step follows from the von Neumann trace inequality.
As such, the assumed Haar dual certificate $(\gamma,Y)$ satisfies $\operatorname{Col}(Y)\subseteq\operatorname{Col}(\Pi_\gamma)$ without loss of generality.

Put $Q:=\sum_{z\in\mathcal{Z}(\mathcal{D})}\gamma(z)\frac{\Delta^2}{\|z\|^2}zz^\top-Y$.
Then $\tr Q$ equals the dual value of $(\gamma,Y)$.
Let $\Pi$ denote orthogonal projection onto $\sspan\{x_i\}_{i\in\mathcal{I}}$.
Then $\operatorname{Col}(Y)\subseteq\operatorname{Col}(\Pi_\gamma)\subseteq\operatorname{Col}(\Pi)$, and so we may strengthen an inequality that is implied by the dual feasibility of $(\gamma,Y)$:
\[
A
:=\Pi-\sum_{z\in\mathcal{Z}(\mathcal{D})}\gamma(z)\Big(1-\frac{\Delta^2}{\|z\|^2}\Big)zz^\top-Q
=\Pi-\sum_{z\in\mathcal{Z}(\mathcal{D})}\gamma(z)zz^\top+Y
\succeq0.
\]
By Lemma~\ref{lem.ep.vectors}(i), $\Pi\in\arg\sqz(\mathcal{D},\Delta)$, and so $\tr\Pi=\tr Q$.
As such,
\[
0
\leq \tr A
=-\sum_{z\in\mathcal{Z}(\mathcal{D})}\gamma(z)(\|z\|^2-\Delta^2)
\leq 0.
\]
Considering the far left- and right-hand sides, we infer two important conclusions:
\begin{itemize}
\item[(a)]
Since $A\succeq0$ and $\tr A=0$, we necessarily have $A=0$.
\item[(b)]
Since $\displaystyle \sum_{z\in\mathcal{Z}(\mathcal{D})}\gamma(z)(\|z\|^2-\Delta^2)=0$, then $\gamma(z)>0$ only if $z$ is a contact vector of $\mathcal{D}$.
\end{itemize}
To be explicit, (b) applies Lemma~\ref{lem.fixed contact length}.
Let $T$ denote the span of the contact vectors of $\mathcal{D}$.
Rearranging $A=0$ from (a) gives $\sum_{z\in\mathcal{Z}(\mathcal{D})}\gamma(z)zz^\top=\Pi+Y$, and so (b) implies
\[
\sspan\{x_i\}_{i\in\mathcal{I}}
=\operatorname{Col}(\Pi)
\subseteq T
\subseteq\sspan\{x_i\}_{i\in\mathcal{I}},
\]
meaning $T=\sspan\{x_i\}_{i\in\mathcal{I}}$, as desired.
\end{proof}

\begin{lemma}[complementary slackness]
\label{lem.compslack}
Suppose $\sqz(\mathcal{D},\Delta)$ admits a Haar dual certificate and select any $M\in\arg\sqz(\mathcal{D},\Delta)$.
Then $\arg\dual(\mathcal{D},\Delta)$ is the set of points $(\gamma,Y)$ that are feasible in $\dual(\mathcal{D},\Delta)$ and further satisfy
\begin{align}
\label{eq.supp alpha}
\operatorname{supp}(\gamma)
&\subseteq\Big\{z\in\mathcal{Z}(\mathcal{D}):\|M^{1/2}z\|=\Delta\Big\},\\
\label{eq.col space Y}
\operatorname{Col}(Y)
&\subseteq\Big\{x:Mx=x\Big\},\\
\label{eq.M vs dual vars}
M
&=\sum_{z\in\mathcal{Z}(\mathcal{D})}\gamma(z)(M^{1/2}z)(M^{1/2}z)^\top-Y.
\end{align}
\end{lemma}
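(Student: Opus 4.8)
The plan is to carry out the usual complementary-slackness argument, organized around a single duality-gap identity. Abbreviate $W:=\sum_{z\in\mathcal{Z}(\mathcal{D})}\gamma(z)zz^\top$ and $R:=I-W+Y$, so that dual feasibility of $(\gamma,Y)$ reads $R\succeq0$, $Y\succeq0$, $\gamma\geq0$, $|\operatorname{supp}(\gamma)|<\infty$. First I would record that, for every $M$ feasible in $\sqz(\mathcal{D},\Delta)$ and every $(\gamma,Y)$ feasible in $\dual(\mathcal{D},\Delta)$,
\begin{multline*}
\tr M-\Big(\Delta^2\sum_{z\in\mathcal{Z}(\mathcal{D})}\gamma(z)-\tr Y\Big)\\
=\tr\big(M^{1/2}RM^{1/2}\big)+\tr\big((I-M)^{1/2}Y(I-M)^{1/2}\big)\\
+\sum_{z\in\mathcal{Z}(\mathcal{D})}\gamma(z)\big(\|M^{1/2}z\|^2-\Delta^2\big).
\end{multline*}
This is bookkeeping: expand $\tr(M^{1/2}RM^{1/2})=\tr M-\tr(MW)+\tr(MY)$ and $\tr((I-M)^{1/2}Y(I-M)^{1/2})=\tr Y-\tr(MY)$, use $\sum_z\gamma(z)\|M^{1/2}z\|^2=\tr(MW)$, and cancel. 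The left-hand side is the primal value of $M$ minus the dual value of $(\gamma,Y)$, and each of the three terms on the right is nonnegative: the first two because $M\preceq I$ makes $M^{1/2}RM^{1/2}$ and $(I-M)^{1/2}Y(I-M)^{1/2}$ positive semidefinite, the third because $\gamma\geq0$ and $z^\top Mz\geq\Delta^2$.

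For the forward inclusion, fix $M\in\arg\sqz(\mathcal{D},\Delta)$ and $(\gamma,Y)\in\arg\dual(\mathcal{D},\Delta)$. Since $\sqz(\mathcal{D},\Delta)$ admits a Haar dual certificate, strong duality holds: $\val\sqz(\mathcal{D},\Delta)=\val\dual(\mathcal{D},\Delta)$. Hence the left-hand side of the identity equals $\tr M-\val\dual(\mathcal{D},\Delta)=0$, which forces all three right-hand terms to vanish. Vanishing of the third, term by term, gives $\gamma(z)>0\Rightarrow\|M^{1/2}z\|=\Delta$, i.e., \eqref{eq.supp alpha}. Vanishing of the second gives $(I-M)^{1/2}Y(I-M)^{1/2}=0$, hence $(I-M)Y=0$, i.e., $MY=Y$ (and $YM=Y$ by symmetry), which is \eqref{eq.col space Y}. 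Vanishing of the first gives $M^{1/2}RM^{1/2}=0$, hence $MR=RM=0$; expanding $M(I-W+Y)=0$ and substituting $MY=Y$ yields $MW=M+Y$. Since $M+Y$ is symmetric, $MW=(MW)^\top=WM$, so $M$ and $W$ commute, hence so do $M^{1/2}$ and $W$, and therefore $M=M^{1/2}WM^{1/2}-Y=\sum_{z\in\mathcal{Z}(\mathcal{D})}\gamma(z)(M^{1/2}z)(M^{1/2}z)^\top-Y$, which is \eqref{eq.M vs dual vars}.

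For the reverse inclusion I would run the computation backwards. Suppose $(\gamma,Y)$ is feasible in $\dual(\mathcal{D},\Delta)$ and satisfies \eqref{eq.supp alpha}--\eqref{eq.M vs dual vars}. Then \eqref{eq.supp alpha} makes the third term of the identity vanish term by term; \eqref{eq.col space Y} gives $MY=Y$, so the second term $\tr Y-\tr(MY)$ vanishes; and the trace of \eqref{eq.M vs dual vars} gives $\tr M=\tr(MW)-\tr Y$, so the first term $\tr M-\tr(MW)+\tr(MY)$ collapses to $-\tr Y+\tr Y=0$ (again using $MY=Y$). Thus the left-hand side is $0$, meaning the dual value of $(\gamma,Y)$ equals $\tr M=\val\sqz(\mathcal{D},\Delta)$. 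Combined with weak duality $\val\sqz(\mathcal{D},\Delta)\geq\val\dual(\mathcal{D},\Delta)\geq$ (dual value of $(\gamma,Y)$), this forces the dual value of $(\gamma,Y)$ to equal $\val\dual(\mathcal{D},\Delta)$, so $(\gamma,Y)\in\arg\dual(\mathcal{D},\Delta)$.

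The main obstacle I anticipate is the two passages from a scalar trace equality to an operator equality: from $\tr(M^{1/2}RM^{1/2})=0$ to $MR=0$, and from $\tr((I-M)^{1/2}Y(I-M)^{1/2})=0$ to $(I-M)Y=0$. Each uses the standard fact that a positive semidefinite matrix of zero trace is itself zero, applied twice (first to $M^{1/2}RM^{1/2}$, then to $(R^{1/2}M^{1/2})^\top(R^{1/2}M^{1/2})$, and likewise for $Y$). The other point needing a touch of care is that symmetry of $MW$ forces $M$ and $W$ to commute, which is precisely what makes \eqref{eq.M vs dual vars} come out with $M^{1/2}WM^{1/2}$ rather than merely $MW$; everything else is routine linear algebra and weak duality.
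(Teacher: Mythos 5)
Your proof is correct and takes essentially the same complementary-slackness route as the paper: the duality-gap identity you write is exactly the chain of inequalities the paper obtains by taking the trace of $M^{1/2}RM^{1/2}\succeq0$, and the three nonnegative terms correspond term-for-term to the three slack conditions. The only divergence is your derivation of \eqref{eq.M vs dual vars}, where you pass from $M^{1/2}RM^{1/2}=0$ to $MR=0$ and invoke commutativity of $M$ and $W$; the paper avoids this detour by reading $M^{1/2}RM^{1/2}=0$ directly as $M=M^{1/2}WM^{1/2}-M^{1/2}YM^{1/2}$ and then using $M^{1/2}YM^{1/2}=Y$ (a consequence of \eqref{eq.col space Y}), which is slightly shorter but not materially different.
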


\begin{proof}
Suppose $(\gamma,Y)$ is feasible in $\dual(\mathcal{D},\Delta)$.
Then $I-\sum_{z\in\mathcal{Z}(\mathcal{D})}\gamma(z)zz^\top+Y
\succeq0$.
Multiplying by $M^{1/2}$ on both sides then gives
\begin{equation}
\label{eq.pos def zero}
M-\sum_{z\in\mathcal{Z}(\mathcal{D})}\gamma(z)M^{1/2}zz^\top M^{1/2}+M^{1/2}YM^{1/2}
\succeq0.
\end{equation}
We take the trace and rearrange to get
\[
\operatorname{tr}M
\geq\sum_{z\in\mathcal{Z}(\mathcal{D})}\gamma(z)z^\top Mz-\operatorname{tr}(MY)
\geq\Delta^2\sum_{z\in\mathcal{Z}(\mathcal{D})}\gamma(z)-\operatorname{tr}(MY)
\geq\Delta^2\sum_{z\in\mathcal{Z}(\mathcal{D})}\gamma(z)-\operatorname{tr}Y,
\]
where the last step applied the von Neumann trace inequality.
Since $\sqz(\mathcal{D},\Delta)$ admits a Haar dual certificate by assumption, $\arg\dual(\mathcal{D},\Delta)$ is the set of points $(\gamma,Y)$ that are feasible in $\dual(\mathcal{D},\Delta)$ and further make all of the above inequalities achieve equality.

Equality in the second inequality is characterized by \eqref{eq.supp alpha}, while equality in the third inequality is characterized by \eqref{eq.col space Y}.
Equality in the first inequality occurs precisely when the positive-semidefinite matrix in \eqref{eq.pos def zero} has trace zero, i.e., the matrix equals zero.
As such,
\[
M
=\sum_{z\in\mathcal{Z}(\mathcal{D})}\gamma(z)M^{1/2}zz^\top M^{1/2}-M^{1/2}YM^{1/2}.
\]
Furthermore, \eqref{eq.col space Y} implies $M^{1/2}YM^{1/2}=Y$, and so the above is equivalent to \eqref{eq.M vs dual vars}.
\end{proof}

In the finite case, strong duality is guaranteed by Theorem~\ref{thm.finite sd}.
Given $M\in\arg\sqz(\mathcal{D},\Delta)$, then Lemma~\ref{lem.compslack} enables a quick procedure to find a dual certificate for $M$.
Denote
\[
\mathcal{Z}_0:=\Big\{z\in\mathcal{Z}(\mathcal{D}):\|M^{1/2}z\|=\Delta\Big\},
\qquad
E:=\Big\{x:Mx=x\Big\},
\]
and consider the feasibility semidefinite program
\begin{align}
\label{eq.findcert}\text{find}\quad&(\gamma,Y)\\
\nonumber\text{subject to}\quad&\sum_{z\in\mathcal{Z}_0}\gamma(z)zz^\top-Y\preceq I,\quad\Pi_{E^\perp}Y\Pi_{E^\perp}=0,\\
\nonumber&M=\sum_{z\in\mathcal{Z}_0}\gamma(z)(M^{1/2}z)(M^{1/2}z)^\top-Y,\quad Y\succeq0, \quad \gamma\geq0
\end{align}
Importantly, solving \eqref{eq.findcert} is much faster than solving $\dual(\mathcal{D},\Delta)$ since $|\mathcal{Z}_0|\ll|\mathcal{Z}(\mathcal{D})|$, and so \eqref{eq.findcert} can be used to promote any heuristic SqueezeFit solver to a fast certifiably correct algorithm, much like~\cite{Bandeira:16,IguchiMPV:17,RosenCBL:16}.
Indeed, given a prospective solution $M$ satisfying $0\preceq M\preceq I$, we may:
\begin{itemize}
\item[(i)]
Find the shortest vectors in $\{M^{1/2}z\}_{z\in\mathcal{Z}(\mathcal{D})}$ from $\{M^{1/2}x_i\}_{i\in\mathcal{I}}$.
If these shortest vectors have length $\Delta$, then this certifies that $M$ is feasible in $\sqz(\mathcal{D},\Delta)$ and gives $\mathcal{Z}_0$ for the next step.
\item[(ii)]
Solve the feasibility semidefinite program \eqref{eq.findcert} to find a dual certificate $(\gamma,Y)$.
\end{itemize}
By Lemma~\ref{lem.compslack}, the primal value of $M$ equals the dual value of $(\gamma,Y)$, and one may verify this \textit{a posteriori}.
Weak duality then implies $M\in\arg\sqz(\mathcal{D},\Delta)$.

One way to solve (i) is to partition $\mathcal{Z}(\mathcal{D})$ into subsets $\mathcal{Z}_{st}:=\{x_i-x_i:i,j\in\mathcal{I},y_i=s,y_j=t\}$ and then find the shortest vectors in $\{M^{1/2}z\}_{z\in\mathcal{Z}_{st}}$ from $\{M^{1/2}x_i\}_{i\in\mathcal{I}}$ for each $s,t\in[k]$ with $s\neq t$.
This amounts to a fundamental problem in computational geometry:
Given $A,B\in\mathbb{R}^d$, find the closest pairs $(x,y)\in A\times B$.
Litiu and Kountanis~\cite{LitiuK:97} devised an $O_d(n\log^{d-1}n)$ divide-and-conquer algorithm that solves the problem for the taxicab metric in the special case where $A$ and $B$ are linearly separable.
In practice, one might construct a $k$-d tree for $A$ in $O_d(n\log n)$ time, and then use it to perform nearest neighbor search in $O_d(\log n)$ time on average~\cite{FriedmanBF:76} for each member of $B$.
Next, (ii) is polynomial in $d$ and $|\mathcal{Z}_0|$, and furthermore, Lemma~\ref{lem.bound on omega M eps} below gives that $|\mathcal{Z}_0|\leq d^4$ for generic data (we suspect this upper bound is loose).
Overall, one may expect to accomplish (i) and (ii) in time that is roughly linear in $n$.

\begin{lemma}
\label{lem.bound on omega M eps}
For every $d,k,n>0$ and every $\{y_i\}_{i\in[n]}\in[k]^n$, there exists a set $\mathcal{X}$ that is open and dense in $(\mathbb{R}^d)^n$ such that for every $\{x_i\}_{i\in[n]}\in\mathcal{X}$, $\Delta>0$, and every $M\in\operatorname{sqz}(\{(x_i,y_i)\}_{i\in\mathcal{I}},\Delta)$,
\[
\Big|\Big\{(i,j):1\leq i<j\leq n,y_i\neq y_j,\|M^{1/2}(x_i-x_j)\|=\Delta\Big\}\Big|
<\Big(\tbinom{d+1}{2}+1\Big)^2.
\]
\end{lemma}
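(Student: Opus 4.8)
My plan is to reduce the statement to a purely linear-algebraic incidence fact about generic configurations, and then settle that fact by a spanning-forest dimension count; optimality (or even feasibility) of $M$ will play no role. Write $z_{ij}=x_i-x_j$ and $A_{ij}=z_{ij}z_{ij}^\top$, and let $r=\binom{d+1}{2}$, the dimension of the space $\operatorname{Sym}(d)$ of symmetric $d\times d$ matrices. Since $\|M^{1/2}z_{ij}\|^2=z_{ij}^\top Mz_{ij}=\langle A_{ij},M\rangle$, the quantity to be bounded is the number of differently-labeled pairs $(i,j)$ for which $A_{ij}$ lies on the affine hyperplane $\{A:\langle A,M\rangle=\Delta^2\}$ of $\operatorname{Sym}(d)$; equivalently, for which the lift $\widehat A_{ij}:=(A_{ij},1)$ lies on the linear hyperplane $\{(A,t):\langle A,M\rangle-\Delta^2 t=0\}$ of $\operatorname{Sym}(d)\oplus\mathbb R$ (a proper subspace, because $\Delta\neq0$). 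So it suffices to produce an open dense $\mathcal X\subseteq(\mathbb R^d)^n$, not depending on $\Delta$ (indeed one can take it independent of the labels too), so that for every configuration in $\mathcal X$ no linear hyperplane of the $(r+1)$-dimensional space $\operatorname{Sym}(d)\oplus\mathbb R$ contains the lifts of $\binom{r+1}{2}+1$ differently-labeled pairs. Since $\binom{r+1}{2}<(r+1)^2$, this proves the lemma, with room to spare. (If one preferred to use the earlier machinery, Theorem~\ref{thm.finite sd}, Lemma~\ref{lem.compslack}, and Carathéodory would supply, for optimal $M$, a dual certificate supported on $\le r$ pairs, giving a different handle; I would not use it.)

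For $\mathcal X$ I would take the complement of the zero set of a finite list of polynomials in the coordinates of $(x_1,\dots,x_n)$, which is automatically open and is dense as long as no listed polynomial is identically zero. Index the list by the (finitely many) forests $F$ on the vertex set $[n]$ with exactly $r+1$ edges; the polynomial attached to $F$ is the $(r+1)\times(r+1)$ determinant of the matrix whose columns are the lifts $\widehat A_e$, $e\in F$. This determinant is not identically zero: the edges of a forest impose no constraint on their difference vectors, so for any prescribed $(v_e)_{e\in F}$ there is a configuration with $z_e=v_e$ for all $e\in F$ (build up each tree one vertex at a time, and place the remaining vertices arbitrarily); choosing the $v_e$ generic, the determinant becomes the lift-determinant of $r+1$ generic rank-one forms, which is nonzero because the Veronese variety $\{vv^\top:v\in\mathbb R^d\}$ lies on no affine hyperplane of $\operatorname{Sym}(d)$ (evaluate $v^\top Nv=c$ at $v=0$ and rescale) and hence affinely spans $\operatorname{Sym}(d)$, so that it carries $r+1$ affinely independent points.

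With $\mathcal X$ so chosen, fix a configuration in $\mathcal X$ and any symmetric $M$, and let $S$ be the set of differently-labeled pairs $(i,j)$ with $\langle A_{ij},M\rangle=\Delta^2$. Regard $S$ as a graph on the vertices it touches, with component sizes $p_1,\dots,p_c\ge2$, and pick a spanning forest $F_0\subseteq S$, so that $|F_0|=\sum_t(p_t-1)$. If $|F_0|\ge r+1$, then some $r+1$ edges of $F_0$ still form a forest, so by the defining property of $\mathcal X$ their lifts are linearly independent and hence span $\operatorname{Sym}(d)\oplus\mathbb R$; but they all lie on the proper linear hyperplane $\{(A,t):\langle A,M\rangle-\Delta^2 t=0\}$, a contradiction. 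Therefore $\sum_t(p_t-1)\le r$. Combining $|S|\le\sum_t\binom{p_t}{2}$ with the convexity of $p\mapsto\binom p2$—for which the right-hand side, subject to $\sum_t(p_t-1)\le r$, is maximized by putting the whole budget into a single component—gives $|S|\le\binom{r+1}{2}<(r+1)^2=\bigl(\tbinom{d+1}{2}+1\bigr)^2$.

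The only place where something genuinely has to be verified is the choice of $\mathcal X$, and there the entire burden is the elementary fact that the Veronese sits in no hyperplane of $\operatorname{Sym}(d)$; so I do not anticipate a real obstacle, only the bookkeeping of enumerating the right polynomials, confirming that none involves $\Delta$, and checking that each is not identically zero. As a byproduct this argument actually yields the sharper bound $\binom{\binom{d+1}{2}+1}{2}$, which is consistent with the paper's remark that the stated bound is loose.
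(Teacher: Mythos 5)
Your proof is correct, and it is a genuine tightening of the paper's argument rather than a restatement of it. Both proofs share the same underlying engine: cut out the bad configurations by determinants of lifted rank-one forms (your $\widehat A_e=(z_ez_e^\top,1)$ is exactly the paper's matrix $F$ with the appended all-ones column), and observe that such a determinant is not identically zero because along an acyclic edge set one can prescribe the difference vectors freely and then specialize to rank-one matrices that, once lifted, are affinely independent. Where the two proofs diverge is in the combinatorial step that locates a suitable acyclic edge set. The paper restricts attention to the two special forests it finds easiest to exhibit --- stars and matchings --- and then invokes Han's theorem, that a graph on $(\binom{d+1}{2}+1)^2$ edges must contain a star or a matching of size $\binom{d+1}{2}+1$; this is what forces the squared bound. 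You notice that nothing in the determinant argument requires a star or matching: any forest on $r+1$ edges, $r:=\binom{d+1}{2}$, gives the free-difference-vector construction, so the constraint is simply that a spanning forest of the ``tight'' graph $S$ has at most $r$ edges. The convexity of $p\mapsto\binom p2$ then converts this into $|S|\le\binom{r+1}{2}$, which is strictly smaller than the paper's $(r+1)^2$ and confirms the authors' suspicion that their bound is loose. Two minor advantages of your formulation worth noting: your $\mathcal X$ is independent of both the labels $\{y_i\}$ and of $\Delta$ (the paper's $\mathcal X$ depends on the labels through $\Omega$), and your proof of nonvanishing of $p_F$ is organized around the clean statement that the Veronese affinely spans $\operatorname{Sym}(d)$, which replaces the paper's explicit construction with $z_{D+1}=2z_1$.
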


\begin{proof}
Fix $d,k,n>0$ and $\{y_i\}_{i\in[n]}\in[k]^n$, and denote $\Omega:=\{(i,j):1\leq i<j\leq n,y_i\neq y_j\}$ and $D:=\binom{d+1}{2}$.
We may assume $|\Omega|\geq(D+1)^2$, since the result is otherwise immediate.
In this case, we will find $\mathcal{X}$ for which something stronger than the desired conclusion holds:
For every $\{x_i\}_{i\in[n]}\in\mathcal{X}$ and every $\mathcal{J}\in\binom{\Omega}{(D+1)^2}$, there is no nonzero $L\in\mathbb{R}^{d\times d}$ such that $\|L(x_i-x_j)\|^2$ is constant over $(i,j)\in \mathcal{J}$.
In particular, for every such $\mathcal{J}$, we will find $\mathcal{K}\in\binom{\mathcal{J}}{D+1}$ for which there is no $L\in\mathbb{R}^{d\times d}$ such that $\|L(x_i-x_j)\|^2=1$ for every $(i,j)\in \mathcal{K}$.

We start by finding $\{z_l\}_{l\in[D+1]}\in(\mathbb{R}^d)^{D+1}$ for which there is no $L\in\mathbb{R}^{d\times d}$ such that $\|Lz_l\|^2=1$ for every $l\in[D+1]$.
Let $\{A_i\}_{i\in[D]}$ be any basis of the $D$-dimensional vector space of symmetric matrices.
By the spectral theorem, each $A_i$ can be decomposed as a linear combination of rank-$1$ matrices $\{v_{ij}v_{ij}^\top\}_{j\in[d]}$ of unit trace, and so $\{v_{ij}v_{ij}^\top\}_{i\in[D],j\in[d]}$ spans the vector space.
Select any basis from this spanning set, define the first $D$ of the $z_l$'s to be the corresponding $v_{ij}$'s, and take $z_{D+1}:=2z_1$.
Since $\{z_lz_l^\top\}_{l\in[D]}$ is a basis of unit-trace matrices, we have that $1=\|Lz_l\|^2=\langle L^\top L,z_lz_l^\top\rangle$ for every $l\in[D]$ if and only if $L^\top L=I$.
In this case, $\|Lz_{D+1}\|^2=4\|Lz_1\|^2=4$, and so there is no $L\in\mathbb{R}^{d\times d}$ such that $\|Lz_l\|^2=1$ for every $l\in[D+1]$.

Next, for each $\mathcal{K}\in\binom{\Omega}{D+1}$, we will construct a polynomial $p_\mathcal{K}\in\mathbb{R}[X_{ij}:i\in[d],j\in[n]]$ such that $p_\mathcal{K}(P)$ is nonzero only if there is no $L\in\mathbb{R}^{d\times d}$ such that $\|L(x_i-x_j)\|^2=1$ for every $(i,j)\in \mathcal{K}$.
To this end, select any basis $\{A_l\}_{l\in[D]}$ of the $D$-dimensional vector space of symmetric matrices.
For each $(i,j)\in\mathcal{K}$, consider the decomposition $(x_i-x_j)(x_i-x_j)^\top=\sum_{l\in[D]}c_{(i,j),l}A_l$, and let $F\in\mathbb{R}^{\mathcal{K}\times[D+1]}$ be defined by
\[
F_{(i,j),l}=\left\{\begin{array}{cl}
c_{(i,j),l}&\text{if }l\in[D]\\
1&\text{if }l=D+1.
\end{array}\right.
\]
We will take $p_\mathcal{K}$ to be the polynomial that maps $\{x_i\}_{i\in[n]}$ to $\operatorname{det}(F)$.
Indeed, if $\operatorname{det}(F)\neq0$, then the all-ones vector does not lie in the span of the first $D$ columns of $F$, i.e., there is no $L$ such that
\[
\|L(x_i-x_j)\|^2
=\langle L^\top L,(x_i-x_j)(x_i-x_j)^\top\rangle
=\sum_{l\in[D]}c_{(i,j),l}\langle L^\top L,A_l\rangle
=1
\qquad
\forall (i,j)\in\mathcal{K}.
\]
We claim that $p_\mathcal{K}$ is a nonzero polynomial provided there exists  $\{x_i\}_{i\in[n]}\in(\mathbb{R}^d)^n$ and a bijection $f\colon\mathcal{K}\to[D+1]$ such that $x_i-x_j=z_{f(i,j)}$, where $\{z_l\}_{l\in[D+1]}$ is the example constructed above.
Indeed, since $\{z_lz_l^\top\}_{l\in[D]}$ is a basis, the corresponding $D\times D$ block of $F$ has full rank, meaning the first $D$ columns of $F$ are linearly independent.
By construction, these columns are also independent of the all-ones vector, and so $\operatorname{det}(F)\neq0$.
Finally, since there exists $\{x_i\}_{i\in[n]}\in(\mathbb{R}^d)^n$ such that $p_\mathcal{K}(\{x_i\}_{i\in[n]})=\operatorname{det}(F)\neq0$, we must have $p_\mathcal{K}\neq0$.

We now use the polynomials $p_\mathcal{K}$ to construct a larger polynomial $p\in\mathbb{R}[X_{ij}:i\in[d],j\in[n]]$:
\[
p(X):=\prod_{\mathcal{J}\in\binom{\Omega}{(D+1)^2}}\sum_{\mathcal{K}\in\binom{\mathcal{J}}{D+1}}p_\mathcal{K}(X)^2.
\]
We claim that the result follows by taking $\mathcal{X}=\{\{x_i\}_{i\in[n]}:p(\{x_i\}_{i\in[n]})\neq0\}$.
By construction, we have that every $\{x_i\}_{i\in[n]}\in\mathcal{X}$ satisfies $p(\{x_i\}_{i\in[n]})\neq0$, meaning that for every $\mathcal{J}\in\binom{\Omega}{(D+1)^2}$, there exists $\mathcal{K}\in\binom{\mathcal{J}}{D+1}$ such that $p_\mathcal{K}(\{x_i\}_{i\in[n]})\neq0$ (implying there is no $L\in\mathbb{R}^{d\times d}$ such that $\|L(x_i-x_j)\|^2=1$ for every $(i,j)\in\mathcal{K}$).
It remains to establish that $\mathcal{X}$ is open and dense, i.e., that $p\neq0$, or equivalently, that for every $\mathcal{J}\in\binom{\Omega}{(D+1)^2}$, there exists $\mathcal{K}\in\binom{\mathcal{J}}{D+1}$ such that $p_\mathcal{K}\neq0$.

Pick $\mathcal{J}\in\binom{\Omega}{(D+1)^2}$ and consider the graph with vertex set $[n]$ and edge set $\mathcal{J}$.
This graph has $(D+1)^2$ edges, and so the main result in~\cite{Han:12} implies that either (i) there exists a vertex of degree at least $D+1$, or (ii) there exists a matching of size at least $D+1$.
In the case of (i), let $\mathcal{K}$ be any $D+1$ of the edges incident to the vertex of maximum degree.
Then $\mathcal{K}=\{(i,j):i\in\mathcal{K}'\}$ for some $j\in[n]$ and $\mathcal{K}'\in\binom{[n]\setminus\{j\}}{D+1}$.
In this case, we can take any bijection $f'\colon\mathcal{K'}\to[D+1]$ and define $\{x_i\}_{i\in[n]}$ so that $x_i=z_{f'(i)}$ whenever $i\in\mathcal{K}'$ and otherwise $x_i=0$ (here, $z_l$ is defined in the example above).
Then $f\colon\mathcal{K}\to[D+1]$ defined by $f(i,j)=f'(i)$ is also a bijection and $x_i-x_j=x_i=z_{f'(i)}=z_{f(i,j)}$ for every $(i,j)\in\mathcal{K}$, implying $p_\mathcal{K}\neq0$.
In the case of (ii), let $\mathcal{K}$ be any $D+1$ of the edges in the maximum matching, and define $\{x_i\}_{i\in[n]}$ as follows:
Select any bijection $f\colon\mathcal{K}\to[D+1]$, and for each $(i,j)\in\mathcal{K}$, define $x_i=z_{f(i,j)}$ and $x_j=0$.
(For each vertex $i\in[n]$ that is not incident to an edge in $\mathcal{K}$, we may take $x_i=0$, say.)
Then $x_i-x_j=x_i=z_{f(i,j)}$ for every $(i,j)\in\mathcal{K}$, and so $p_\mathcal{K}\neq0$.
In either case, we have that there exists $\mathcal{K}\in\binom{\mathcal{J}}{D+1}$ such that $p_\mathcal{K}\neq0$, as desired.
\end{proof}


\section{Projection factor recovery with SqueezeFit}
\label{sec.pfr with sf}

We observe that SqueezeFit frequently succeeds in projection factor recovery when the $T$-component of the data is ``well behaved'' and the $T^\perp$-component of the data is ``independent'' of the $T$-component.
See Figure~\ref{fig.comparison} for an illustrative example.
In this section, we prove two general instances of this phenomenon.
The first instance enjoys a short proof:

\begin{theorem}
\label{thm.pfr clean}
Given $\mathcal{D}_0=\{(x_i,y_i)\}_{i\in\mathcal{I}}$, then every $M\in\arg\sqz(\mathcal{D}_0,\Delta)$ satisfies $\operatorname{Col}(M)\subseteq\sspan\{x_i\}_{i\in\mathcal{I}}$.
Select any nonempty set $\mathcal{S}\subseteq(\sspan\{x_i\}_{i\in\mathcal{I}})^\perp$ and define $\mathcal{D}=\{(x_i+s,y_i)\}_{i\in\mathcal{I},s\in \mathcal{S}}$.
Then $\arg\sqz(\mathcal{D},\Delta)=\arg\sqz(\mathcal{D}_0,\Delta)$.
\end{theorem}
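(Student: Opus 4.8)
The plan is to prove the two assertions in turn, with the first one doing almost all of the work. Write $V:=\sspan\{x_i\}_{i\in\mathcal{I}}$ and let $\Pi_V$ denote orthogonal projection onto $V$. For the first assertion, fix $M\in\arg\sqz(\mathcal{D}_0,\Delta)$ and consider the truncated matrix $M':=\Pi_V M\Pi_V$. I would check that $M'$ is again feasible in $\sqz(\mathcal{D}_0,\Delta)$: the bound $0\preceq M'\preceq I$ follows from $x^\top M'x=(\Pi_Vx)^\top M(\Pi_Vx)$ together with $0\preceq M\preceq I$, and since $\mathcal{Z}(\mathcal{D}_0)\subseteq V$ we have $\Pi_Vz=z$ and hence $z^\top M'z=z^\top Mz\geq\Delta^2$ for every $z\in\mathcal{Z}(\mathcal{D}_0)$. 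Now $\tr M'=\tr(\Pi_V M)=\tr(M^{1/2}\Pi_V M^{1/2})\leq\tr M$ because $M^{1/2}(I-\Pi_V)M^{1/2}\succeq0$, so optimality of $M$ forces $\tr\bigl(M^{1/2}(I-\Pi_V)M^{1/2}\bigr)=0$, hence $M^{1/2}=\Pi_V M^{1/2}$ and $\operatorname{Col}(M)=\operatorname{Col}(M^{1/2})\subseteq V$. (If $\sqz(\mathcal{D}_0,\Delta)$ is infeasible this assertion is vacuous.)

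Next I would set up a comparison between the feasible regions of $\sqz(\mathcal{D},\Delta)$ and $\sqz(\mathcal{D}_0,\Delta)$. Since $\mathcal{S}\neq\emptyset$, picking any $s\in\mathcal{S}$ gives $x_i-x_j=(x_i+s)-(x_j+s)$, so $\mathcal{Z}(\mathcal{D}_0)\subseteq\mathcal{Z}(\mathcal{D})$; consequently every point feasible in $\sqz(\mathcal{D},\Delta)$ is feasible in $\sqz(\mathcal{D}_0,\Delta)$, which already yields $\val\sqz(\mathcal{D},\Delta)\geq\val\sqz(\mathcal{D}_0,\Delta)$. For the reverse direction I would invoke the first assertion: every $z\in\mathcal{Z}(\mathcal{D})$ has the form $z=(x_i-x_j)+w$ with $y_i\neq y_j$ and $w\in\mathcal{S}-\mathcal{S}\subseteq V^\perp$, so for $M\in\arg\sqz(\mathcal{D}_0,\Delta)$ we have $Mw=0$ (as $\operatorname{Col}(M)\subseteq V$ and $M=M^\top$) and therefore $z^\top Mz=(x_i-x_j)^\top M(x_i-x_j)\geq\Delta^2$; since also $0\preceq M\preceq I$, such an $M$ is feasible in $\sqz(\mathcal{D},\Delta)$, giving $\val\sqz(\mathcal{D},\Delta)\leq\tr M=\val\sqz(\mathcal{D}_0,\Delta)$ whenever $\arg\sqz(\mathcal{D}_0,\Delta)\neq\emptyset$ (which holds whenever $\sqz(\mathcal{D}_0,\Delta)$ is feasible, by the extreme value theorem).

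Combining these two observations, the optimal values agree (and in the infeasible case both argmin sets are empty, so the conclusion is trivial). The remaining step is pure bookkeeping: if $M\in\arg\sqz(\mathcal{D}_0,\Delta)$, then by the first assertion $M$ is feasible in $\sqz(\mathcal{D},\Delta)$ with $\tr M=\val\sqz(\mathcal{D}_0,\Delta)=\val\sqz(\mathcal{D},\Delta)$, hence $M\in\arg\sqz(\mathcal{D},\Delta)$; conversely if $N\in\arg\sqz(\mathcal{D},\Delta)$, then $N$ is feasible in $\sqz(\mathcal{D}_0,\Delta)$ (by the containment of feasible regions) with $\tr N=\val\sqz(\mathcal{D},\Delta)=\val\sqz(\mathcal{D}_0,\Delta)$, hence $N\in\arg\sqz(\mathcal{D}_0,\Delta)$. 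I do not expect a genuine obstacle: the only step requiring an idea is the trace comparison $\tr(\Pi_V M\Pi_V)\leq\tr M$ with its equality analysis, and the main things to be careful about are the precise description of $\mathcal{Z}(\mathcal{D})$ (in particular that differing labels force $y_i\neq y_j$, so the $\mathcal{S}$-shifts cancel exactly when we want them to) and the vacuous infeasible case.
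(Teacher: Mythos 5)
Your proposal is correct and follows essentially the same route as the paper: project an optimizer $M$ onto $V=\sspan\{x_i\}$ to form $\Pi_V M\Pi_V$, observe feasibility is preserved and trace cannot increase, use optimality to force $\operatorname{Col}(M)\subseteq V$, and then note that $\sqz(\mathcal{D}_0,\Delta)$ is a relaxation of $\sqz(\mathcal{D},\Delta)$ while every optimizer of the former is feasible in the latter. The only cosmetic difference is that the paper cites the von Neumann trace inequality for the step $\tr(\Pi_V M)\leq\tr M$ with equality forcing $\operatorname{Col}(M)\subseteq V$, whereas you derive it directly from $M^{1/2}(I-\Pi_V)M^{1/2}\succeq 0$; this is a slightly more elementary phrasing of the same fact.
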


In particular, if the $T$-component $\mathcal{D}_0$ is ``well behaved'' (in the sense that every $M\in\arg\sqz(\mathcal{D}_0,\Delta$) satisfies $\operatorname{Col}(M)=\sspan\{x_i\}_{i\in\mathcal{I}}$), then SqueezeFit succeeds in projection factor recovery from $\mathcal{D}$ (just find any $M\in\arg\sqz(\mathcal{D},\Delta)$ and take $\Pi=M(M^\top M)^{-1}M^\top$).

\begin{proof}[Proof of Theorem~\ref{thm.pfr clean}]
Let $\Pi$ denote orthogonal projection onto $\sspan\{x_i\}_{i\in\mathcal{I}}$.
Then for every $M$ that is feasible in $\sqz(\mathcal{D}_0,\Delta)$, $\Pi M\Pi$ is also feasible with $\tr(\Pi M\Pi)=\tr(M\Pi)\leq\tr M$.
The last inequality follows from the von Neumann trace inequality, where equality occurs only if $\operatorname{Col}(M)\subseteq\operatorname{Col}(\Pi)$.
As such, $M\in\arg\sqz(\mathcal{D}_0,\Delta)$ only if $M=\Pi M\Pi$.
Next, $\mathcal{Z}(\mathcal{D}_0)\subseteq\mathcal{Z}(\mathcal{D})$, and so $\sqz(\mathcal{D}_0,\Delta)$ is a relaxation of $\sqz(\mathcal{D},\Delta)$.
Since every $\Pi M\Pi\in\arg\sqz(\mathcal{D}_0,\Delta)$ is trivially feasible in $\sqz(\mathcal{D},\Delta)$, we then have $\arg\sqz(\mathcal{D}_0,\Delta)\subseteq\arg\sqz(\mathcal{D},\Delta)\subseteq\arg\sqz(\mathcal{D}_0,\Delta)$.
\end{proof}

The above guarantee uses a weak notion of ``well behaved'' for the $T$-component of $\mathcal{D}$, but a strong notion of ``independent'' for the $T^\perp$-component.
In what follows, we strengthen ``well behaved'' to mean $\Delta$-fixed, and weaken ``independent'' so that the $T^\perp$-component isn't identical (but follows the same Gaussian distribution) as you vary the $T$-component.
We begin with a technical lemma, which requires a definition:
Given a closed convex cone $\mathcal{C}\subseteq\mathbb{R}^n$, the \textbf{statistical dimension} of $\mathcal{C}$ is given by
\[
\delta(\mathcal{C})
:=\mathop{\mathbb{E}}_{g\sim\mathcal{N}(0,I)}\Big(\sup_{x\in\mathcal{C}\cap\mathbb{S}^{n-1}}\langle x,g\rangle\Big)^2.
\]
The notion of statistical dimension was introduced in~\cite{living} to characterize phase transitions in compressed sensing and elsewhere.

\begin{lemma}
\label{lem.stat dim}
There exist universal constants $c_1,c_2,c_3>0$ for which the closed convex cone
\[
\mathcal{C}_n
~:=~\Big\{~v\in\mathbb{R}^n~:~v\geq0,~\max v\leq \frac{c_1\sqrt{\log n}}{n}\cdot1^\top v~\Big\}
\]
has statistical dimension $\delta(\mathcal{C}_n)\in[c_2n,\frac{1}{2}n]$ for every $n\geq c_3$.
\end{lemma}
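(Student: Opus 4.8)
The plan is to prove the two bounds separately: the upper bound $\delta(\mathcal C_n)\le n/2$ by comparison with the nonnegative orthant, and the lower bound $\delta(\mathcal C_n)\ge c_2 n$ by exhibiting, for a typical Gaussian $g$, an explicit feasible test direction inside $\mathcal C_n\cap\mathbb S^{n-1}$.

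For the upper bound, note $\mathcal C_n\subseteq\mathbb R^n_{\ge 0}$, so for every $g$ the supremum defining $\delta(\mathcal C_n)$ is at most $\sup_{x\ge 0,\,\|x\|=1}\langle x,g\rangle=\|g_+\|$, where $g_+$ is the entrywise positive part of $g$ (using $\langle x,g\rangle\le\langle x,g_+\rangle\le\|x\|\cdot\|g_+\|$). Hence, away from the exponentially rare event on which this supremum is negative, its square is at most $\|g_+\|^2$; since $\mathbb E\|g_+\|^2=n\,\mathbb E[(g_1)_+^2]=n/2$ and the contribution of that rare event is negligible, this yields $\delta(\mathcal C_n)\le n/2$. (Equivalently, this is monotonicity of statistical dimension under inclusion together with the standard value $\delta(\mathbb R^n_{\ge 0})=n/2$ from \cite{living}.) This step holds for all $n$ and imposes no constraint on $c_1$.

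For the lower bound, the key observation is that the positive part $g_+$ of a standard Gaussian vector is, with high probability, already a feasible point of $\mathcal C_n$, provided $c_1$ is a sufficiently large absolute constant. Indeed, $\max g_+=\max(0,\max_i g_i)\le 2\sqrt{\log n}$ off an event of probability at most $n\cdot e^{-2\log n}=1/n$ (union bound over Gaussian tails), while $1^\top g_+=\sum_i(g_i)_+$ has mean $n/\sqrt{2\pi}$ and variance $\Theta(n)$, so $1^\top g_+\ge\tfrac{n}{2\sqrt{2\pi}}$ off an event of probability $O(1/n)$ (Chebyshev). On the intersection $E_n$ of these two events, $\max g_+\le 2\sqrt{\log n}\le\frac{c_1\sqrt{\log n}}{n}\cdot\frac{n}{2\sqrt{2\pi}}\le\frac{c_1\sqrt{\log n}}{n}\,1^\top g_+$ as soon as $c_1\ge 4\sqrt{2\pi}$, so $g_+\in\mathcal C_n$; moreover $g_+\ne 0$ on $E_n$, so $\hat x:=g_+/\|g_+\|\in\mathcal C_n\cap\mathbb S^{n-1}$ and $\langle\hat x,g\rangle=\|g_+\|$ since $\langle g_+,g\rangle=\|g_+\|^2$. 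Thus on $E_n$ the supremum defining $\delta(\mathcal C_n)$ is at least $\|g_+\|$, hence its square is at least $\|g_+\|^2$. Intersecting $E_n$ further with $\{\|g_+\|^2\ge n/4\}$, which also has probability $1-O(1/n)$ by Chebyshev (the mean of $\|g_+\|^2$ is $n/2$ and its variance is $\Theta(n)$), we get that the squared supremum is $\ge n/4$ on an event of probability $1-O(1/n)$. Since the squared supremum is nonnegative everywhere, $\delta(\mathcal C_n)=\mathbb E\big[(\sup\nolimits_{x\in\mathcal C_n\cap\mathbb S^{n-1}}\langle x,g\rangle)^2\big]\ge\tfrac n4\cdot\big(1-O(1/n)\big)\ge\tfrac n8$ for $n$ large; this gives the claim with $c_2=\tfrac18$ and $c_3$ chosen so that $1-O(1/n)\ge\tfrac12$ once $n\ge c_3$.

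The routine parts are the three concentration estimates (a Gaussian maximum via the union bound, and two applications of Chebyshev to sums of i.i.d.\ terms with $\Theta(n)$ mean and variance). The only point requiring real thought is the crux of the lower bound: recognizing that the right test vector is $g_+$, and that it lands in $\mathcal C_n$ precisely because $\max_i g_i=O(\sqrt{\log n})$ is small compared to $1^\top g_+=\Theta(n)$ by a factor matching the cone's defining ratio $\tfrac{c_1\sqrt{\log n}}{n}$, once $c_1$ is a large enough absolute constant. I do not anticipate a substantive obstacle beyond calibrating these constants.
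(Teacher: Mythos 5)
Your proof is correct, and it takes a genuinely different route from the paper's on the lower bound. The upper bound is the same in both: $\mathcal{C}_n\subseteq\mathbb{R}^n_{\ge 0}$ and $\delta(\mathbb{R}^n_{\ge 0})=n/2$ by monotonicity. For the lower bound, the paper goes through Sudakov minoration: it bounds $\delta(\mathcal{C}_n)$ below by $\epsilon^2\log N(\mathcal{C}_n\cap\mathbb{S}^{n-1},\epsilon)$, estimates the packing number via a randomized-rotation argument that reduces it to the normalized surface area $s(\mathcal{C}_n\cap\mathbb{S}^{n-1})=\Pr(g\in\mathcal{C}_n)$, computes that probability by conditioning on $g\ge 0$ and applying a union bound and Chebyshev, and finally invokes a volume estimate for sphere packings. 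You instead observe that the natural maximizer $g_+/\|g_+\|$ in the orthant already lies in $\mathcal{C}_n\cap\mathbb{S}^{n-1}$ with probability $1-O(1/n)$ (because $\max g_+ = O(\sqrt{\log n})$ while $1^\top g_+ = \Theta(n)$, matching the cone's defining ratio once $c_1\ge 4\sqrt{2\pi}$), and then read off $\langle g_+/\|g_+\|,g\rangle=\|g_+\|$ directly; a third Chebyshev estimate gives $\|g_+\|^2\ge n/4$ on the same high-probability event, and nonnegativity of the squared supremum closes the bound. Your argument is more direct and elementary: it avoids Sudakov minoration, the packing-vs-area comparison, and the external spherical-packing volume estimate, using only three routine concentration bounds. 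It also yields a better constant $c_2$ (you get $1/8$ at a moderate $n$, whereas the paper's chain of inequalities delivers something closer to $0.03$) and requires a smaller $c_1$ (you need $c_1\ge 4\sqrt{2\pi}\approx 10$ rather than the paper's $c_1=50$), though the lemma only asserts existence of universal constants so either choice suffices.
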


\begin{theorem}
\label{thm.exact recovery living edge}
Let $c_1,c_2,c_3>0$ be the constants in Lemma~\ref{lem.stat dim}.
Suppose $\mathcal{D}_0=\{(x_i,y_i)\}_{i\in[a]}$ is $\Delta$-fixed, and let $\Pi$ denote orthogonal projection onto the $r$-dimensional $\sspan\{x_i\}_{i\in[a]}$ in $\mathbb{R}^d$.
For each $i\in[a]$, draw $\{g_{it}\}_{t\in[b]}$ independently from $\mathcal{N}(0,\sigma^2(I-\Pi))$, and define $\mathcal{D}=\{(x_i+g_{it},y_i)\}_{i\in[a],t\in[b]}$.
Then $\arg\sqz(\mathcal{D},\Delta)=\{\Pi\}$ with probability at least $1-6|\mathcal{Z}_0|e^{-c_2b/48}$, provided
\[
b\geq\max\bigg\{\frac{2}{c_2}(d-r),c_3\bigg\},
\qquad
\mathsf{SNR}\geq3c_1\Big(4+\frac{c_2}{2}\Big)\sqrt{\log b},
\]
where $\mathcal{Z}_0$ denotes the contact vectors of $\mathcal{D}_0$, $\lambda$ is the smallest non-zero eigenvalue of $\sum_{z\in\mathcal{Z}_0}zz^\top$, and $\mathsf{SNR}:=\lambda/(2r\sigma^2)$.
\end{theorem}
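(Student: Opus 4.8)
The plan is to certify that $\Pi$ is the unique optimizer of $\sqz(\mathcal D,\Delta)$ by exhibiting a Haar dual certificate of a special shape, so that complementary slackness (Lemma~\ref{lem.compslack}) collapses any optimizer to $\Pi$. First note that $\Pi$ is feasible deterministically: every $w\in\mathcal Z(\mathcal D)$ equals $(x_i+g_{it})-(x_j+g_{js})$ with $y_i\ne y_j$, and since the Gaussians lie in $T^\perp=\operatorname{Col}(I-\Pi)$ we have $\Pi w=x_i-x_j$, so $w^\top\Pi w=\|x_i-x_j\|^2\ge\Delta^2$ (because $\mathcal D_0$ is $\Delta$-fixed, hence feasible), with equality exactly when $x_i-x_j$ is a contact vector of $\mathcal D_0$ (which has length $\Delta$ by Lemma~\ref{lem.fixed contact length}); also $0\preceq\Pi\preceq I$. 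In particular $\operatorname{val}\sqz(\mathcal D,\Delta)\le\tr\Pi=r$.

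Build the certificate by lifting the one for $\mathcal D_0$. By Theorem~\ref{thm.contact fixed} the contact vectors $\mathcal Z_0$ of $\mathcal D_0$ span $T$, and as in the proof of Theorem~\ref{thm.strong duality fixed contact span} one has the certificate $\gamma_0\equiv1/\lambda$ on $\mathcal Z_0$, $Y_0=\tfrac1\lambda\sum_{z\in\mathcal Z_0}zz^\top-\Pi\succeq0$, of value $r$. For each $z=x_{i(z)}-x_{j(z)}\in\mathcal Z_0$ spread its weight over the perturbed copies $w_{z,t,s}:=(x_{i(z)}+g_{i(z),t})-(x_{j(z)}+g_{j(z),s})$ by setting $\gamma^*(w_{z,t,s})=\tfrac1\lambda\,p^z_t\,q^z_s$ for probability vectors $p^z,q^z$ to be chosen. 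Since $\Pi w_{z,t,s}=z$ with $\|z\|=\Delta$, for \emph{any} choice of the $p^z,q^z$ one has $Y^*:=\sum_w\gamma^*(w)(\Pi w)(\Pi w)^\top-\Pi=Y_0\succeq0$ with $\operatorname{Col}(Y^*)\subseteq T$, and the dual value of $(\gamma^*,Y^*)$ is still $r$ (the term $\Delta^2\sum_w\gamma^*(w)$ cancels $\tr\sum_w\gamma^*(w)(\Pi w)(\Pi w)^\top$). Writing $w_{z,t,s}=z+(g_{i(z),t}-g_{j(z),s})$ and expanding, the one remaining requirement, dual feasibility $\sum_w\gamma^*(w)ww^\top-Y^*\preceq I$, reads
\[
\Pi+\tfrac1\lambda\sum_{z\in\mathcal Z_0}\big(z\eta_z^\top+\eta_z z^\top\big)+\tfrac1\lambda\sum_{z\in\mathcal Z_0}\Big(\sum_t p^z_t\, g_{i(z),t}g_{i(z),t}^\top+\sum_s q^z_s\, g_{j(z),s}g_{j(z),s}^\top-2\xi_z\xi_z^\top\Big)\preceq I,
\]
where $\xi_z:=\sum_t p^z_t g_{i(z),t}$ and $\eta_z:=\xi_z-\sum_s q^z_s g_{j(z),s}$. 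Because the ``error'' matrix must have negative-semidefinite $T$–$T$ block while the cross terms $z\eta_z^\top+\eta_z z^\top$ contribute only to the $T$–$T^\perp$ blocks, a block-matrix computation shows this holds iff $\eta_z=0$ for every $z$ and the last sum (supported on $T^\perp$) is $\preceq\Pi_{T^\perp}$.

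So it remains to choose, for each $z$, probability vectors $p^z,q^z$ that are (a) near-uniform, lying in $\mathcal C_b$ so $\max_t p^z_t\le c_1\sqrt{\log b}/b$, and of full support; (b) balanced, $\sum_t p^z_t g_{i(z),t}=\sum_s q^z_s g_{j(z),s}$ (i.e. $\eta_z=0$); and then to check the operator-norm bound. For (b), the pair $(p^z,q^z)$ must lie in the kernel of the $(d-r)\times 2b$ Gaussian matrix $[\,G_{i(z)}\mid-G_{j(z)}\,]$ (columns the relevant $g$'s), intersected with the cone of balanced near-uniform pairs; by the Gordon/statistical-dimension intersection principle this intersection contains a full-support point with high probability once that cone has statistical dimension exceeding $d-r$, which Lemma~\ref{lem.stat dim} supplies because $b\ge\tfrac2{c_2}(d-r)$ gives $\delta(\mathcal C_b)\ge c_2 b\ge2(d-r)$. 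For the norm bound, near-uniformity together with matrix Bernstein concentration places $\sum_t p^z_t g_{i(z),t}g_{i(z),t}^\top$ within $O(\sigma^2\sqrt{\log b})$ of $\sigma^2\Pi_{T^\perp}$; summing over $\mathcal Z_0$ and invoking $\mathsf{SNR}=\lambda/(2r\sigma^2)\ge3c_1(4+\tfrac{c_2}2)\sqrt{\log b}$ yields the required $\preceq\Pi_{T^\perp}$. Each contact vector incurs a failure probability $\le6e^{-c_2 b/48}$, and a union bound over $\mathcal Z_0$ gives overall success probability $\ge1-6|\mathcal Z_0|e^{-c_2 b/48}$. I expect this step—weights that are simultaneously near-uniform, balanced, and tame enough for the norm bound—to be the main obstacle; the rest is bookkeeping.

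For uniqueness, the certificate and Lemma~\ref{lem.compslack} force every $M\in\arg\sqz(\mathcal D,\Delta)$ to satisfy $w^\top Mw=\Delta^2$ for all $w\in\operatorname{supp}(\gamma^*)$, $Mx=x$ on $\operatorname{Col}(Y^*)$, and $\tr M=r$. Fix one $z\in\mathcal Z_0$ (so all $w_{z,t,s}\in\operatorname{supp}(\gamma^*)$ by full support) and put $N:=M-\Pi$. Since $\Pi w_{z,t,s}=z$ with $\|z\|=\Delta$, we get $w_{z,t,s}^\top N w_{z,t,s}=\Delta^2-\|z\|^2=0$ for all $t,s$; subtracting the identity for $(t,s)$ from that for $(t',s)$ shows $(g_{i(z),t}-g_{i(z),t'})^\top N g_{j(z),s}$ is independent of $s$, hence $(g_{i(z),t}-g_{i(z),t'})^\top N(g_{j(z),s}-g_{j(z),s'})=0$ for all indices. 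Since $b>d-r$, the differences $g_{j(z),s}-g_{j(z),s'}$ span $T^\perp$ almost surely, so $N$ maps $T^\perp$ into $T$; and the $g_{i(z),t}-g_{i(z),t'}$ likewise span $T^\perp$, so the $T^\perp$–$T^\perp$ block of $N$ vanishes. Then $M=\Pi+N\succeq0$ has a zero $T^\perp$–$T^\perp$ block, forcing its $T$–$T^\perp$ block to vanish too; so $M$ equals $I_T\oplus0$ up to a symmetric perturbation of its $T$–$T$ block, and $0\preceq M\preceq I$ together with $\tr M=r=\dim T$ forces that perturbation to be zero, i.e. $M=\Pi$. (When $d=r$ the claim is immediate, since then $T^\perp=\{0\}$ and $\mathcal D=\mathcal D_0$.)
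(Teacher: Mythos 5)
Your high-level strategy---certifying $\Pi$ by lifting the dual certificate for $\mathcal D_0$---matches the paper's, but the construction you propose for $\gamma$ deviates from the paper's in a way that leaves genuine gaps in the two technical steps that carry all the weight.

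\textbf{The dual weights and the intersection lemma.}
You spread the weight for each contact vector $z=x_i-x_j$ as a product $\gamma^*(w_{z,t,s})=\tfrac1\lambda\,p^z_t q^z_s$ over \emph{all} pairs $(t,s)$, which forces the balancing condition $\eta_z=\sum_t p^z_t g_{i,t}-\sum_s q^z_s g_{j,s}=0$. That is a condition on the \emph{pair} $(p^z,q^z)\in\mathbb R^{2b}$: you need a nonzero $(p,q)$ in the cone $\{(p,q)\in\mathcal C_b\times\mathcal C_b:1^\top p=1^\top q\}\subseteq\mathbb R^{2b}$ that also lies in $\operatorname{Null}\!\big([G_i\mid -G_j]\big)$. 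Lemma~\ref{lem.stat dim} bounds $\delta(\mathcal C_b)$ for a cone in $\mathbb R^b$; it does not bound the statistical dimension of your coupled cone in $\mathbb R^{2b}$, and your appeal to ``$\delta(\mathcal C_b)\ge c_2 b\ge 2(d-r)$'' mixes up the ambient dimensions. The paper avoids this entirely with a cleaner construction: it only puts mass on \emph{diagonal} pairs $(x_i+g_{it})-(x_j+g_{jt})$, so the cross-term vanishing is precisely $G_{ij}v_{ij}=0$ with $G_{ij}$ the $d\times b$ matrix of columns $g_{it}-g_{jt}$, and the question collapses to $\operatorname{Null}(G_{ij})\cap\mathcal C_b\neq\{0\}$, which is exactly what Lemma~\ref{lem.stat dim} plus the living-on-the-edge kinematic formula handle. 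Missing this diagonal pairing is the key conceptual gap in your proposal; what you'd need instead is a new statistical-dimension estimate for your coupled cone, which you neither prove nor cite.

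\textbf{The operator-norm bound.}
Your matrix-Bernstein sketch for the $T^\perp$-block treats the sums $\sum_t p^z_t g_{i,t}g_{i,t}^\top$ as if the weights were fixed, but $(p^z,q^z)$ are selected \emph{using} the Gaussians (through the balancing/null-space condition), so the independence a matrix-Bernstein argument requires is absent. The paper sidesteps this: the $\max$-bound $\max_t(v_{ij})_t\le c_1\sqrt{\log b}/(b\lambda)$ is a deterministic consequence of membership in $\mathcal C_b$, so $\sum_t(v_{ij})_t A_t\preceq(\max v_{ij})\sum_t A_t$ reduces everything to the (weight-free, Gaussian-only) operator norm of $\sum_t(g_{it}-g_{jt})(g_{it}-g_{jt})^\top$, controlled by a standard covariance estimate (Vershynin Cor.\ 5.35). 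Your proposal would need that same decoupling device, which you don't make.

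\textbf{Uniqueness.}
Your complementary-slackness argument for $\arg\sqz(\mathcal D,\Delta)=\{\Pi\}$ is a nice addition (the paper's proof only builds the certificate, establishing $\Pi\in\arg\sqz$), but it relies on $p^z,q^z$ having full support so that the differences $g_{i,t}-g_{i,t'}$ and $g_{j,s}-g_{j,s'}$ over the support span $T^\perp$. Membership in $\mathcal C_b$ only guarantees about $b/(c_1\sqrt{\log b})$ nonzero coordinates, not full support, and for large $b$ the stated hypothesis $b\ge\tfrac2{c_2}(d-r)$ does not force $b/(c_1\sqrt{\log b})>d-r$. So the spanning step is unjustified as written. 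To salvage it you would need either a different certificate (one whose support is provably large enough) or a separate argument that the optimizer is unique.

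In short: same skeleton (lift the $\mathcal D_0$ certificate, intersect a cone with a Gaussian null space, bound the $T^\perp$-block), but your product-form $\gamma^*$ introduces a coupled cone in $\mathbb R^{2b}$ that Lemma~\ref{lem.stat dim} does not cover, your concentration step doesn't handle the weight–noise dependence, and the uniqueness spanning claim lacks the required lower bound on the support size. The paper's diagonal pairing is the device that dissolves the first two issues at once.
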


In order to appreciate the above definition of $\mathsf{SNR}$, first note from Theorem~\ref{thm.contact fixed} that the contact vectors of $\mathcal{D}_0$ contain whatever ``signal'' SqueezeFit uses to find $\Pi$.
In the idealized setting where the contact vectors form an orthogonal basis for $T=\sspan\{x_i\}_{i\in[a]}$ together with its negation, then $\lambda/2$ equals the squared length $\Delta^2$ of each contact vector.
Since this energy is spread over $r$ dimensions, we can say that the amount of signal per dimension is $\lambda/(2r)$.
Intuitively, if the contact vectors ``barely'' span (meaning $\lambda$ is small), then the signal is weaker, whereas additional contact vectors provide stronger signal.
Our notion of signal-to-noise ratio $\mathsf{SNR}$ compares the amount of signal per dimension of $T$ to the amount of noise per dimension of $T^\perp$.

The $\mathsf{SNR}$ threshold in Theorem~\ref{thm.exact recovery living edge} is tight up to logarithmic factors.
Specifically, if we fix $\mathsf{SNR}=\epsilon\in(0,2)$, then for every $\alpha\in(0,(1+\frac{\epsilon}{2})^{-1})$, there exists $d_0=d_0(\alpha)$ such that for every $d\geq d_0$, there exists $\Delta$-fixed $\mathcal{D}_0$ with $r=\lceil\alpha d\rceil$ such that $\arg\sqz(\mathcal{D},\Delta)=\{\Pi\}$ with probability $\geq1/2$ only if $b$ is superpolynomial in $d$.
To see this, let $\{e_i\}_{i\in[d]}$ denote the identity basis, put $a=r+1$, and define $\mathcal{D}_0$ by $(x_i,y_i)=(e_i,0)$ for $i\in[r]$ and $(x_{r+1},y_{r+1})=(0,1)$.
Then $\mathcal{D}_0$ is $\Delta$-fixed with $\Delta=1$ and contact vectors $\{\pm e_i\}_{i\in[r]}$, resulting in $\lambda=2$.
Now take $\sigma^2=1/(r\epsilon)$ so that $\mathsf{SNR}=\epsilon$, and construct $\mathcal{D}$ as in Theorem~\ref{thm.exact recovery living edge}.
By our assumptions on $\epsilon$ and $\alpha$, we may select $\beta\in(\frac{\epsilon\alpha}{2},\min\{\alpha,1-\alpha\})$, $p=\lfloor \beta d\rfloor$, and let $\Pi_p$ denote orthogonal projection onto $\sspan\{e_i\}_{i=r+1}^{r+p}$.
We claim that, unless $b$ is superpolynomial in $d$, then for sufficiently large $d$, it holds with probability $\geq1/2$ that $\Pi_p$ is feasible in $\sqz(\mathcal{D},\Delta)$.
Since $\tr \Pi_p=p<r=\tr\Pi$, we then have $\Pi\not\in\arg\sqz(\mathcal{D},\Delta)$.
Recall that if $Z$ is standard Gaussian and $Q$ is $\chi^2$-distributed with $q$ degrees of freedom, then
\begin{equation}
\label{eq.fundamental bounds}
\operatorname{Pr}(|Z|\geq \xi)\leq 2e^{-\xi^2/2},
\qquad
\operatorname{Pr}(|Q-q|\geq \xi)\leq 2e^{-c\min\{\xi,\xi^2/q\}},
\qquad
\xi\geq0
\end{equation}
for some universal constant $c>0$; in particular, these estimates follow from the Chernoff bound and Hanson--Wright inequality~\cite{RudelsonV:13}, respectively.
We apply these bounds to obtain the estimate
\begin{align*}
\Delta_p^2
&:=\min_{\substack{i\in[r]\\s,t\in[b]}}\Big\|\Pi_p\Big((x_i+g_{is})-(x_{r+1}+g_{r+1,t})\Big)\Big\|^2\\
&\geq 2\min_{\substack{i\in[a]\\s\in[b]}}\|\Pi_pg_{is}\|^2-2\Big(\max_{t\in[b]}\|\Pi_pg_{r+1,t}\|\Big)\Big(\max_{\substack{i\in[r]\\s,t\in[b]}}|\langle \Pi_pg_{is},\tfrac{\Pi_pg_{r+1,t}}{\|\Pi_pg_{r+1,t}\|}\rangle|\Big)\\
&\geq \sigma^2\cdot\Big(2(p-\xi_1)-2(p+\xi_1)^{1/2}\cdot\xi_2\Big)
\end{align*}
with probability $\geq1-2abe^{-c\min\{\xi_1,\xi_1^2/p\}}-2rb^2e^{-\xi_2^2/2}$.
If we select $\xi_1=p^{3/4}$ and $\xi_2=p^{1/4}$, then we may conclude $\Delta_p^2\geq2\beta/(\epsilon\alpha)-o_{d\to\infty}(1)$ with probability $\geq1/2$ unless $b$ is superpolynomial in $d$.
Since $\beta>\epsilon\alpha/2$, we have $\Delta_p^2>1=\Delta^2$ for large $d$, which implies $\Pi_p$ is feasible in $\sqz(\mathcal{D},\Delta)$.

Going the other direction, as a consequence of Theorem~\ref{thm.exact recovery living edge}, it holds that for every $\sigma\geq0$, there exists a sufficiently large $b$ such that $\arg\sqz(\mathcal{D},\Delta)=\{\Pi\}$ with high probability.
Indeed, set
\[
b_1:=\bigg\lceil\max\bigg\{\frac{2}{c_2}(d-r),c_3\bigg\}\bigg\rceil,
\qquad
\mathsf{SNR}_1:=3c_1\Big(4+\frac{c_2}{2}\Big)\sqrt{\log b_1},
\qquad
\sigma_1^2:=\frac{\lambda}{r\cdot\mathsf{SNR}_1}.
\]
If $\sigma\leq\sigma_1$, we may take $b=b_1$ by Theorem~\ref{thm.exact recovery living edge}.
Otherwise, set $p:=(\sigma_1/\sigma)^{d-r}$, and select $b$ large enough so that for independent Bernoulli random variables $\{B_{it}\}_{i\in[a],t\in[b]}$ with mean $p$, it holds with high probability that $\sum_{t\in[b]}B_{it}\geq b_1$ for every $i\in[a]$.
To see why this suffices, observe that there exists a distribution $\mathcal{F}$ such that 
\[
\mathcal{N}(0,\sigma^2(I-\Pi))
=p\cdot\mathcal{N}(0,\sigma_1^2(I-\Pi))+(1-p)\cdot\mathcal{F}.
\]
Let $\mathcal{D}_1$ denote the data points in $\mathcal{D}$ corresponding to the $\mathcal{N}(0,\sigma_1^2(I-\Pi))$ component.
In the high-probability event that $\sum_{t\in[b]}B_{it}\geq b_1$ for every $i\in[a]$, Theorem~\ref{thm.exact recovery living edge} gives that $\arg\sqz(\mathcal{D}_1,\Delta)=\{\Pi\}$ with high probability, which is also feasible in $\sqz(\mathcal{D},\Delta)$.
Since $\sqz(\mathcal{D}_1,\Delta)$ is a relaxation of $\sqz(\mathcal{D},\Delta)$, we may conclude that $\arg\sqz(\mathcal{D},\Delta)=\{\Pi\}$ in the same event.
(This argument takes $b$ to be at least $(\sigma/\sigma_1)^{d-r}$ times the lower bound in Theorem~\ref{thm.exact recovery living edge}, which is a dramatic increase even for $\sigma=2\sigma_1$ when $d-r$ is large.)

By contrast, such a large choice of $b$ is unnecessary for projection factor recovery to be information theoretically possible.
For example, it suffices to have $b>d-r$, even when $\sigma$ is arbitrarily large.
Indeed, for such $b$, it is straightforward to show that with probability $1$, every size-$b$ subcollection of $\{x_i+g_{it}\}_{i\in[a],t\in[b]}$ has affine rank $\geq d-r$, and the subcollections of affine rank $d-r$ are precisely those of the form $\{x_i+g_{it}:t\in[b]\}$.
As such, for projection factor recovery, it suffices to first find the unique balanced partition of the data that minimizes maximum affine rank, then apply principal component analysis to one of the resulting size-$b$ subcollections to recover $(\sspan\{x_i\}_{i\in[a]})^\perp$, and finally take $\Pi$ to be orthogonal projection onto the orthogonal complement of this subspace.
Interestingly, this method does not use the labels $\{y_i\}_{i\in[a]}$ to recover the projection.
Of course, this procedure is not computationally tractable, and it heavily exploits the model of the data.

\begin{proof}[Proof of Theorem~\ref{thm.exact recovery living edge}]
By Lemma~\ref{lem.ep.vectors}(i), $\arg\sqz(\mathcal{D}_0,\Delta)=\{\Pi\}$, which is trivially feasible in $\sqz(\mathcal{D},\Delta)$.
We will modify a dual certificate for $\sqz(\mathcal{D}_0,\Delta)$ to produce a point $(\gamma,Y)$ in $\dual(\mathcal{D},\Delta)$ of the same value.
By the proof of Theorem~\ref{thm.strong duality fixed contact span}, $\sqz(\mathcal{D}_0,\Delta)$ enjoys a dual certificate $(\tilde\gamma,\tilde{Y})$ of the form
\[
\tilde\gamma(z)
:=\left\{\begin{array}{cl}
1/\lambda&\text{if }z\in\mathcal{Z}_0\\
0&\text{if }z\in\mathcal{Z}(\mathcal{D}_0)\setminus\mathcal{Z}_0
\end{array}\right\},
\qquad
\tilde{Y}:=\frac{1}{\lambda}\sum_{z\in\mathcal{Z}_0}zz^\top-\Pi,
\]
Our choice of $(\gamma,Y)$ will take $Y=\tilde{Y}$, but selecting an appropriate $\gamma$ will be more delicate.
Denote $A_\gamma:=\sum_{z\in\mathcal{Z}(\mathcal{D})}\gamma(z)zz^\top-Y$ and $\Pi_\perp:=I-\Pi$.
We will select $\gamma\geq0$ such that
\begin{equation*}
\label{eq.sufficient conditions}
\|\Pi A_\gamma\Pi\|_{2\to2}\leq 1,
\quad
\|\Pi_\perp A_\gamma\Pi_\perp\|_{2\to2}\leq1,
\quad
\Pi A_\gamma\Pi_\perp=0,
\quad
\sum_{z\in\mathcal{Z}(\mathcal{D})}\gamma(z)=\sum_{z\in\mathcal{Z}(\mathcal{D}_0)}\tilde\gamma(z).
\end{equation*}
The first three above together imply $A_\gamma\preceq I$, thereby ensuring $(\gamma,Y)$ is feasible in $\dual(\mathcal{D},\Delta)$, while the final condition ensures that the value of $(\gamma,Y)$ in $\dual(\mathcal{D},\Delta)$ equals the value of $(\tilde\gamma,\tilde{Y})$ in $\dual(\mathcal{D}_0,\Delta)$, as desired.

We first claim that with high probability, it suffices to have the following:
For every $i,j\in[a]$ such that $x_i-x_j\in\mathcal{Z}_0$, there exists $v_{ij}\in\mathbb{R}^b$ such that
\begin{equation}
\label{eq.sufficient conditions 2}
v_{ij}\geq0,
\qquad
1^\top v_{ij}=\frac{1}{\lambda},
\qquad
\max v_{ij}\leq \frac{c_1\sqrt{\log b}}{b\lambda},
\qquad
G_{ij}v_{ij}=0,
\end{equation}
where $G_{ij}$ is the $d\times b$ matrix whose $t$th column is $g_{it}-g_{jt}$.
To see this, consider $\gamma$ defined by
\[
\gamma\Big((x_i+g_s)-(x_j+g_t)\Big)
=\left\{\begin{array}{cl}
(v_{ij})_t&\text{if }x_i-x_j\in\mathcal{Z}_0\text{ and }s=t\\
0&\text{otherwise.}
\end{array}\right.
\]
Then $\gamma\geq0$ is immediate, while $\|\Pi A_\gamma\Pi\|_{2\to2}\leq 1$ follows from
\[
\Pi A_\gamma\Pi
=\sum_{\substack{i,j\in[a]\\x_j-x_j\in\mathcal{Z}_0}}\sum_{t\in[b]}(v_{ij})_t(x_i-x_j)(x_i-x_j)^\top-\Pi Y\Pi
=\Pi\Big(\frac{1}{\lambda}X-\tilde{Y}\Big)\Pi
=\Pi.
\]
Next, \eqref{eq.col space Y} implies $\Pi_\perp Y=0$, and so
\begin{align*}
\|\Pi_\perp A_\gamma\Pi_\perp\|_{2\to2}
&=\bigg\|\sum_{\substack{i,j\in[a]\\x_j-x_j\in\mathcal{Z}_0}}\sum_{t\in[b]}(v_{ij})_t(g_{it}-g_{jt})(g_{it}-g_{jt})^\top\bigg\|_{2\to2}\\
&\leq\frac{c_1\sqrt{\log b}}{b\lambda}\sum_{\substack{i,j\in[a]\\x_j-x_j\in\mathcal{Z}_0}}\bigg\|\sum_{t\in[b]}(g_{it}-g_{jt})(g_{it}-g_{jt})^\top\bigg\|_{2\to2}\\
&\leq\frac{c_1\sqrt{\log b}}{b\lambda}\cdot r\cdot 2\sigma^2\cdot\Big(\sqrt{d-r}+2\sqrt{b}\Big)^2
\leq\frac{3c_1\sqrt{\log b}}{\mathsf{SNR}}\cdot\Big(\frac{d-r}{b}+4\Big)
\leq1,
\end{align*}
where the last line applies Corollary~5.35 in~\cite{Vershynin} with a union bound, the inequality $(p+q)^2\leq 3(p^2+q^2)$, and finally our assumptions on $b$ and $\mathsf{SNR}$; in particular, the first inequality in this last line is valid in an event $\mathcal{E}_1$ of probability $\geq1-2e^{-b/2}$.
Finally, we also have
\[
\Pi A_\gamma\Pi_\perp
=\sum_{\substack{i,j\in[a]\\x_j-x_j\in\mathcal{Z}_0}}\sum_{t\in[b]}(v_{ij})_t(x_i-x_j)(g_{it}-g_{jt})^\top
=\sum_{\substack{i,j\in[a]\\x_j-x_j\in\mathcal{Z}_0}}(x_i-x_j)(G_{ij}v_{ij})^\top
=0,
\]
and
\[
\sum_{z\in\mathcal{Z}(\mathcal{D})}\gamma(z)
=\sum_{\substack{i,j\in[a]\\x_j-x_j\in\mathcal{Z}_0}}\sum_{t\in[b]}(v_{ij})_t
=\frac{|\mathcal{Z}_0|}{\lambda}
=\sum_{z\in\mathcal{Z}(\mathcal{D}_0)}\tilde\gamma(z).
\]

It remains to find $v_{ij}$'s that satisfy \eqref{eq.sufficient conditions 2}.
Equivalently, we must show that with high probability, it holds that for every $i,j\in[a]$ such that $x_i-x_j\in\mathcal{Z}_0$, the random subspace $\operatorname{Null}(G_{ij})$ nontrivially intersects the cone $\mathcal{C}_b$ defined in Lemma~\ref{lem.stat dim}.
Importantly, each $\operatorname{Null}(G_{ij})$ has the same distribution as $\operatorname{Null}(G)$, where $G$ is $(d-r)\times b$ with independent standard Gaussian entries, meaning $\operatorname{Null}(G_{ij})$ is drawn uniformly from the Grassmannian of $(b-d+r)$-dimensional subspaces of $\mathbb{R}^b$.
Then Theorem~7.1 in~\cite{living} gives
\[
d-r\leq\delta(\mathcal{C}_b)-\xi
\qquad
\Longrightarrow
\qquad
\operatorname{Pr}\Big(\mathcal{C}_b\cap\operatorname{Null}(G_{ij})=\{0\}\Big)\leq4\operatorname{exp}\bigg(\frac{-\xi^2/8}{\delta(\mathcal{C}_b)+\xi}\bigg)
\]
whenever $\xi\geq0$.
Recalling Lemma~\ref{lem.stat dim} and selecting $\xi=c_2b/2$, then since $d-r\leq c_2b/2$ by assumption, we obtain nontrivial intersection between $\mathcal{C}_b$ and each $\operatorname{Null}(G_{ij})$ in an event $\mathcal{E}_2$ of probability $\geq1-4|\mathcal{Z}_0|e^{-c_2b/48}$.
The result then follows by a union bound between $\mathcal{E}_1$ and $\mathcal{E}_2$.
\end{proof}

\begin{proof}[Proof of Lemma~\ref{lem.stat dim}]
First, $\mathcal{C}_n$ is contained in the self-dual nonnegative orthant $\mathbb{R}_+^n$.
As such, Propositions~3.1 and~3.2 in~\cite{living} together give
\[
\delta(\mathcal{C}_n)
\leq\delta(\mathbb{R}_+^n)
=\frac{1}{2}n.
\]
Next, given any bounded set $T\subseteq\mathbb{R}^n$, let $N(T,\epsilon)$ denote the size of the largest $\epsilon$-packing, that is, the largest $S\subseteq T$ such that
\[
\min_{\substack{x,y\in S\\x\neq y}}\|x-y\|\geq\epsilon.
\]
By Sudakov minoration (see the proof of Theorem~7.4.1 in~\cite{Vershynin:18}, for example), we may bound statistical dimension in terms of packing numbers:
\begin{equation}
\label{eq.sudakov}
\delta(\mathcal{C}_n)
\geq\Big(\mathop{\mathbb{E}}_{g\sim\mathcal{N}(0,I)}\sup_{x\in\mathcal{C}_n\cap\mathbb{S}^{n-1}}\langle x,g\rangle\Big)^2
\geq c\cdot\sup_{\epsilon>0}\epsilon^2\log N(\mathcal{C}_n\cap\mathbb{S}^{n-1},\epsilon),
\end{equation}
where $c>0$ is some universal constant (one may take $c=1-e^{-1}$, for example).
As such, for the remaining lower bound, it suffices to estimate packing numbers of $\mathcal{C}_n\cap\mathbb{S}^{n-1}$.

To this end, we make a general observation:
Fix a measurable set $T\subseteq\mathbb{S}^{n-1}$ of normalized surface area $s(T):=\operatorname{area}(T)/\operatorname{area}(\mathbb{S}^{d-1})$, let $P$ denote a largest $\epsilon$-packing of $\mathbb{S}^{n-1}$, and find a rotation $Q_0\in\operatorname{SO}(n)$ that maximizes the cardinality of $Q_0P\cap T$.
If we draw $Q$ uniformly from $\operatorname{SO}(n)$, then the linearity of expectation gives
\begin{equation}
\label{eq.max vs avg}
N(T,\epsilon)
\geq |Q_0P\cap T|
\geq \mathbb{E}|QP\cap T|
=\mathbb{E}\sum_{x\in P}\mathbf{1}_{\{Qx\in T\}}
=\sum_{x\in P}\operatorname{Pr}(Qx\in T)
=s(T)\cdot N(\mathbb{S}^{n-1},\epsilon).
\end{equation}
A volume comparison argument gives the following estimate:
\begin{equation}
\label{eq.vol bound}
N(\mathbb{S}^{n-1},2\sin\tfrac{\theta}{2})
\geq(1+o_{n\to\infty}(1))\cdot\sqrt{2\pi n}\cdot\frac{\cos\theta}{\sin^{n-1}\theta}
\qquad
\forall \theta\in(0,\tfrac{\pi}{2}).
\end{equation}
(See~\cite{Jenssen} for a recent improvement to this estimate, and references therein for historical literature related to \eqref{eq.vol bound}, which will suffice for our purposes.)
With this, one obtains a lower bound on $N(T,\epsilon)$ by computing $s(T)$, which equals the probability that $g\sim\mathcal{N}(0,I)$ resides in the positively homogeneous set $\bigcup_{r>0}rT$ generated by $T$.

In our special case of $T=\mathcal{C}_n\cap\mathbb{S}^{n-1}$, we condition on the event $\{g\geq0\}$ to obtain
\[
s(\mathcal{C}_n\cap\mathbb{S}^{n-1})
=\operatorname{Pr}(g\in\mathcal{C}_n)
=2^{-n}\cdot\operatorname{Pr}\Big(\max h\leq\frac{c_1\sqrt{\log n}}{n}\cdot1^\top h\Big),
\]
where there coordinates of $h$ are independent with standard half-normal distribution.
Next, for every choice of $\alpha>0$, the union bound gives
\begin{align*}
\operatorname{Pr}\Big(\max h\leq\frac{c_1\sqrt{\log n}}{n}\cdot1^\top h\Big)
&\geq \operatorname{Pr}\Big(\max h<\alpha\sqrt{\log n}<\frac{c_1\sqrt{\log n}}{n}\cdot1^\top h\Big)\\
&\geq 1-\operatorname{Pr}\Big(\max h\geq\alpha\sqrt{\log n}\Big)-\operatorname{Pr}\Big(\frac{c_1}{n}1^\top h\leq\alpha\Big).
\end{align*}
We apply another union bound with \eqref{eq.fundamental bounds} to estimate the first term:
\[
\operatorname{Pr}\Big(\max h\geq\alpha\sqrt{\log n}\Big)
\leq n\cdot\operatorname{Pr}(|Z|\geq\alpha\sqrt{\log n})
\leq \frac{2}{n^{\alpha^2/2-1}}.
\]
To estimate the second term, we use the fact that each coordinate $h_i$ of $h$ has mean $\sqrt{2/\pi}$ and variance $1-2/\pi$ and apply Chebyshev's inequality:
\[
\operatorname{Pr}\Big(\frac{c_1}{n}1^\top h\leq\alpha\Big)
\leq\operatorname{Pr}\Bigg(\bigg|\frac{1}{n}\sum_{i\in[n]}h_i-\sqrt{\frac{2}{\pi}}\bigg|\geq\sqrt{\frac{2}{\pi}}-\frac{\alpha}{c_1}\Bigg)
\leq\frac{1-2/\pi}{(\sqrt{2/\pi}-\alpha/c_1)^2}.
\]
Combining these estimates, we may select $\alpha=4$ and $c_1=50$ (say) to get $s(\mathcal{C}_n\cap\mathbb{S}^{n-1})\geq\frac{1}{4}\cdot 2^{-n}$ for every $n\geq2$.
Then taking $\theta=\frac{\pi}{12}$ and combining with \eqref{eq.max vs avg} and \eqref{eq.vol bound} gives
\[
\log N(\mathcal{C}_n\cap\mathbb{S}^{n-1},2\sin\tfrac{\pi}{24})
\geq 0.65 n+\tfrac{1}{2}\log n-1.86+o_{n\to\infty}(1).
\]
By \eqref{eq.sudakov}, we are done. 
\end{proof}

\section{Numerical experiments}

\subsection{Implementation variants}

Before describing our numerical experiments, we first discuss a few different implementations of SqueezeFit that allow for scalability and robustness to outliers.

\textbf{Hinge loss.}
SqueezeFit is not feasible if $\Delta$ is larger than the minimum distance between two points with different labels.
In order to make SqueezeFit robust to outliers, we replace the $\Delta$ constraints in $\sqz(\mathcal{D},\Delta)$ with a hinge-loss penalization in the objective:
\begin{equation}
\text{minimize}
\quad
\tr M + \lambda \sum_{z \in \mathcal{Z}(\mathcal{D})} \Big( \Delta^2 - z^\top M z\Big)_+
\quad
\text{subject to}
\quad
0 \preceq M \preceq I
\tag{$\sqz_\lambda (\mathcal D, \Delta)$}
\end{equation}
Here, $a_+:=\max\{0,a\}$.
Importantly, $\sqz_\lambda (\mathcal D, \Delta)$ is feasible regardless of $\Delta$, but this comes at the price of an additional hyperparameter $\lambda>0$.

\textbf{Relaxing $\Delta$ constraints.}
Suppose $\mathcal{D}=\{(x_i,y_i)\}_{i\in[n]}$ is comprised of $n$ data points that are balanced over $k$ labels.
Then $\sqz(\mathcal{D},\Delta)$ is a semidefinite program with $d^2$ variables and $O_k(n^2)$ constraints.
When implemented directly, we find this program to be too slow once $n\geq200$, even when $d$ is small.
However, Lemma~\ref{lem.bound on omega M eps} indicates that typically, most of these $\Delta$ constraints are not tight, and so to accommodate larger values of $n$, we relax many of these constraints.
Fix $s\geq1$, let $S(i,\ell)$ denote the indices of the $s$ nearest neighbors to $x_i$ with label $\ell$, and put
\[
\mathcal{Z}_s(\mathcal{D})
:=\bigcup_{i\in[n]}\bigcup_{\substack{\ell\in[k]\\\ell\neq y_i}}\Big\{x_i-x_j:j\in S(i,\ell)\Big\}.
\]
Then replacing $\mathcal{Z}(\mathcal{D})$ in $\sqz(\mathcal{D},\Delta)$ with $\mathcal Z_s(\mathcal D)$ results in a relaxation with only $O(sn)$ constraints.
In practice, we obtain $\mathcal Z_s(\mathcal D)$ in $O_d(n\log n)$ time using a $k$-d tree.
Overall, passing to $\mathcal Z_s(\mathcal D)$ results in the following variants:
\begin{align}
&\text{minimize}
\quad
\tr M
\quad
\text{subject to}
\quad
z^\top Mz\geq\Delta^2~~\forall z\in\mathcal{Z}_s(\mathcal{D}),
\quad 
0 \preceq M \preceq I
\tag{$\sqz^s (\mathcal D, \Delta)$}\\
&\text{minimize}
\quad
\tr M + \lambda \sum_{z \in \mathcal{Z}_s(\mathcal D)} \Big( \Delta^2 - z^\top M z\Big)_+
\quad
\text{subject to}
\quad
0 \preceq M \preceq I
\tag{$\sqz_\lambda^s (\mathcal D, \Delta)$}
\end{align}
As one might expect, we observe that the optimizers of these variants are close approximations to optimizers of the original programs, even for moderate values of $s$, and the approximation is especially good when $\mathcal{D}$ exhibits clustering structure.

\textbf{Relaxing the identity constraint.}
For every $\mathcal{D}$, there exists $\delta=\delta(\mathcal{D})>0$ such that for every $\Delta\leq\delta$, it holds that every $M\in\arg\sqz(\mathcal{D},\Delta)$ satisfies $\tr M<1$, meaning the constraint $M\preceq I$ is not tight.
As such, we can afford to relax the identity constraint when $\Delta$ is small.
In the absence of the identity constraint, then up to scaling, we may equivalently put $\Delta=1$.
For this reason, we define the variant
\begin{equation}
\text{minimize}
\quad
\tr M
\quad
\text{subject to}
\quad
z^\top Mz\geq1~~\forall z\in\mathcal{Z}(\mathcal{D}),
\quad 
M \succeq 0
\tag{$\sqz(\mathcal D, 0^+)$}
\end{equation}
We similarly define $\sqz_\lambda(\mathcal{D},0^+)$, etc.
We observe that solving this relaxation is considerably faster.

\subsection{Handwritten digits}

In this subsection, we use SqueezeFit to perform compressive classification on the MNIST database of handwritten digits~\cite{LeCunCB:online}.
Specifically, we focus on binary classification between 4s and 9s, since these digits are easily confused.
There are 11,971 of such digits in the training set and 1,991 in the test set.
In order to apply SqueezeFit, we first decrease the dimensionality of the space by forming low-resolution versions of these digits in $[0,1]^{10\times10}$.
Next, we select $n$ points $\mathcal{D}$ at random from the training set and compute $M\in\arg\sqz_\lambda^s(\mathcal{D},0^+)$ with $\lambda=1$ and $s=5$.
We then apply $M^{1/2}$ to the entire training set.
Figure~\ref{fig.szq.pca} illustrates this SqueezeFit compression with $n=50$.
After compression, we apply $K$-nearest neighbor classification on the test set.
Figure~\ref{fig.mnist.image} illustrates this classification in the case of $n=800$ and $K=15$.
Table~\ref{table.mnist} compares the misclassification rates for $K$-nearest neighbor classification after compression with PCA, LDA, and SqueezeFit.
This comparison indicates that SqueezeFit is better at finding low-dimensional components that are amenable to classification, even though it was only able to see a fraction of the training set.

\begin{figure}
\begin{center}
\includegraphics[width=0.3\textwidth]{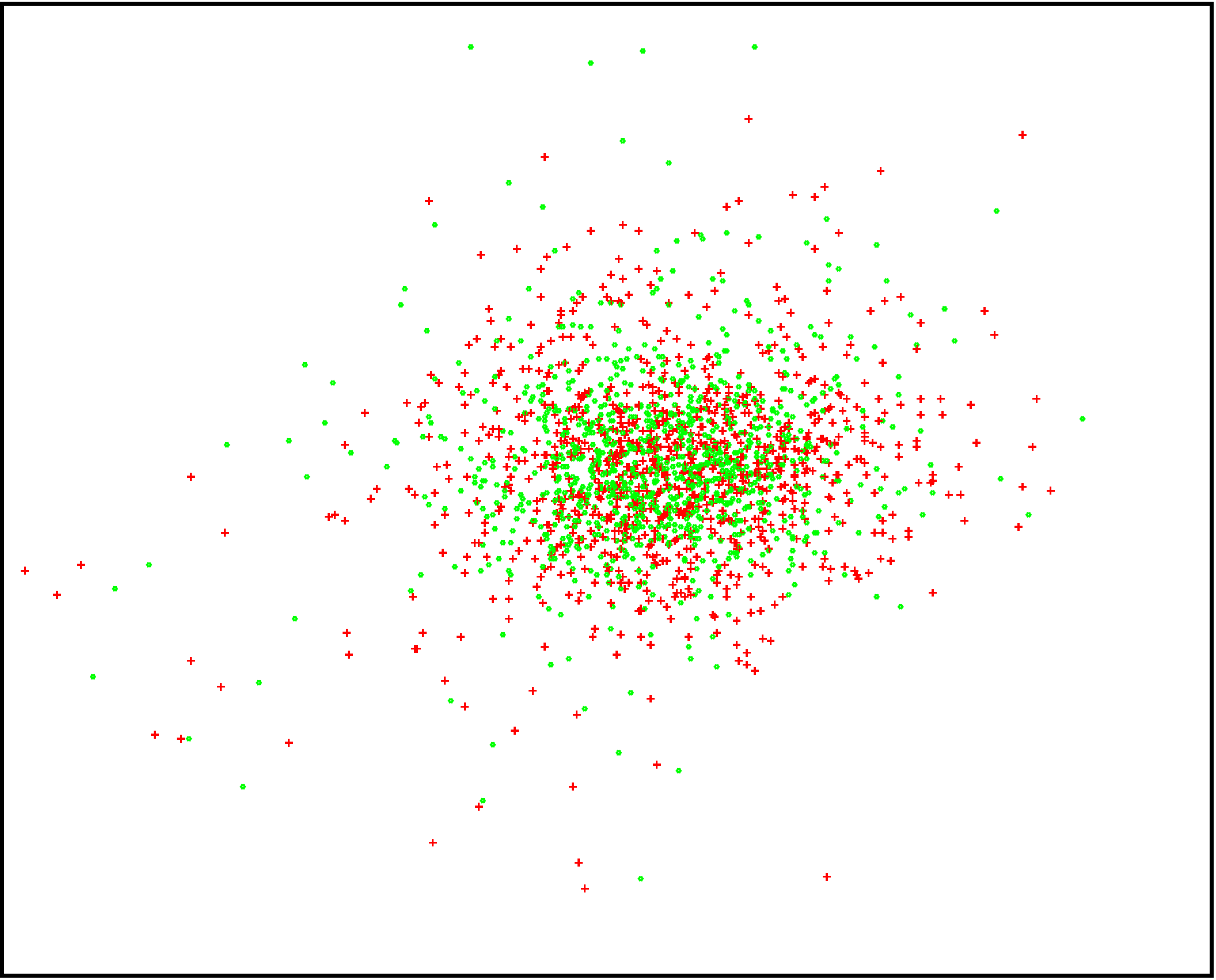}
\qquad
\includegraphics[width=0.3\textwidth]{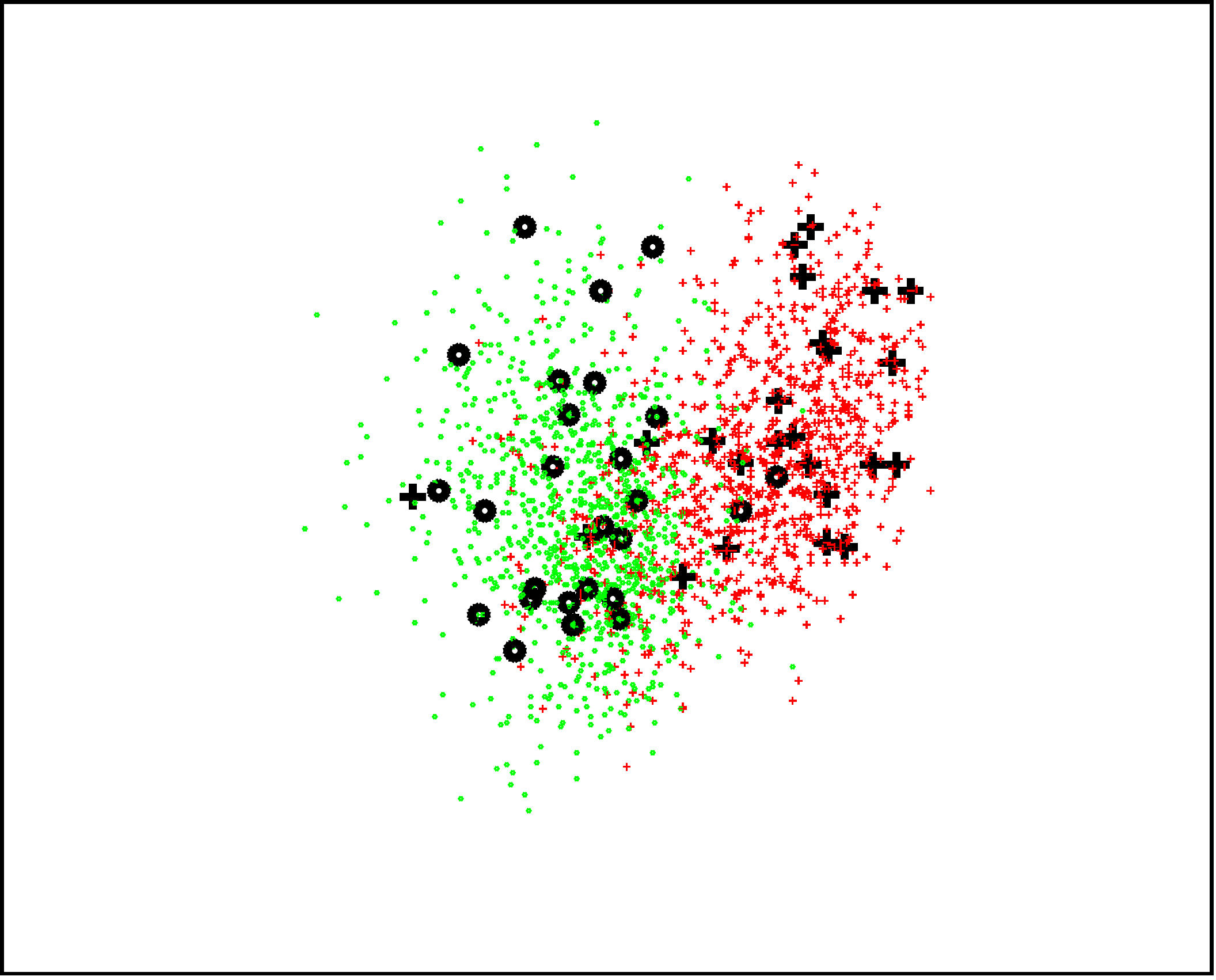}
\end{center}
\caption{\label{fig.szq.pca}
Take low-resolution versions of MNIST digits (4s and 9s) to obtain images in $[0,1]^{10\times10}$.
The two leading principal components of this data are displayed on the left.
Letting $\mathcal{D}$ denote $50$ of these low-resolution images, we run a variant of SqueezeFit to obtain $M$ of rank $5$.
Letting $\Pi$ denote orthogonal projection onto the span of the two leading eigenvectors of $M$, the right-hand plot illustrates $\Pi x$ for $2000$ randomly selected $x$'s from the low-resolution 4s and 9s.
The $50$ points from $\mathcal{D}$ are indicated with `+' and `o'.
Impressively, SqueezeFit identifies components that keep all of the 4s and 9s separated, despite only seeing a small sample.}
\end{figure}

\begin{figure}
\begin{center}
\includegraphics[height=0.25\textwidth, trim={2cm 1cm 2cm 1cm},clip]{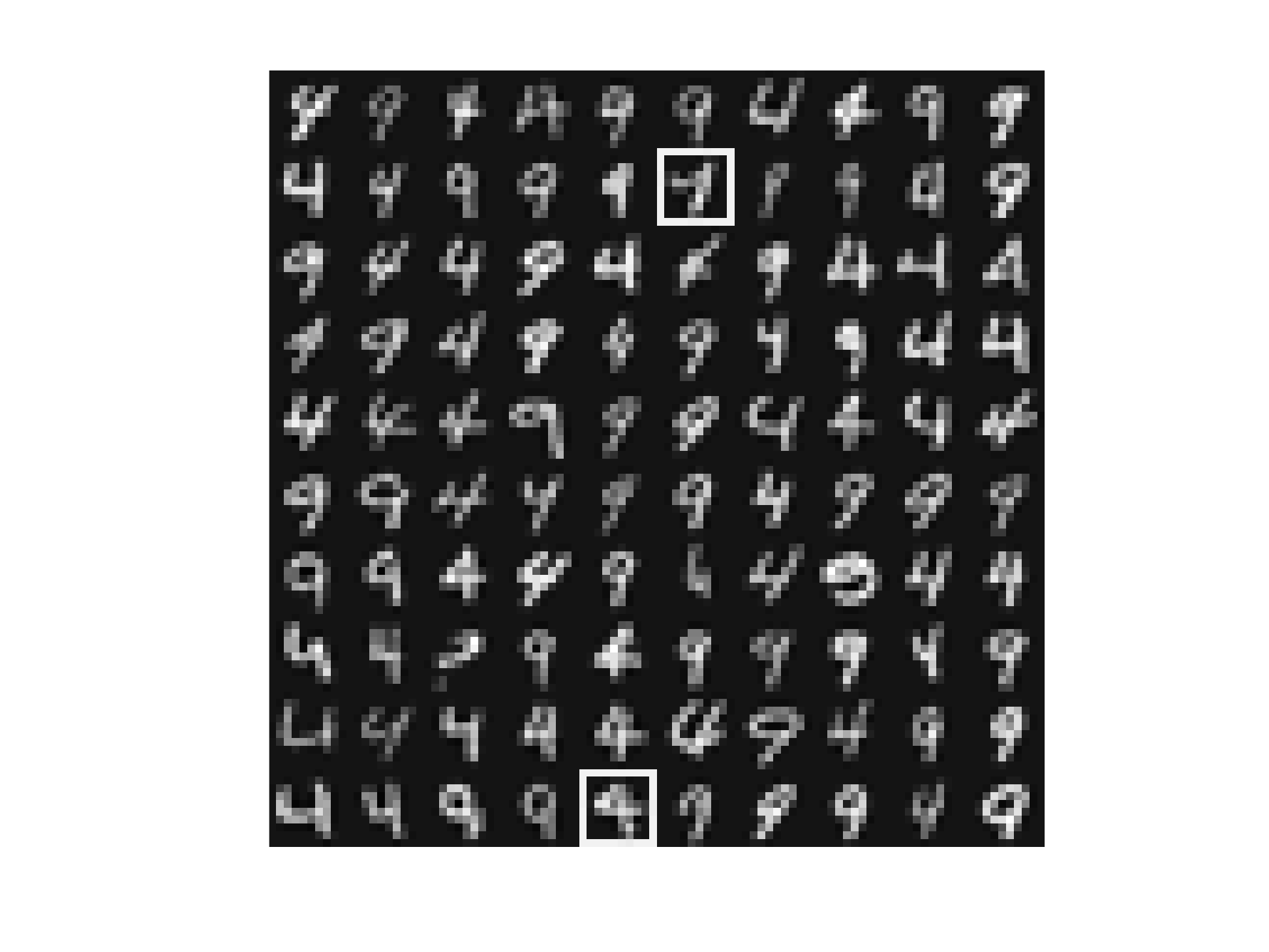}
\includegraphics[height=0.25\textwidth, trim={2cm 1cm 2cm 1cm},clip]{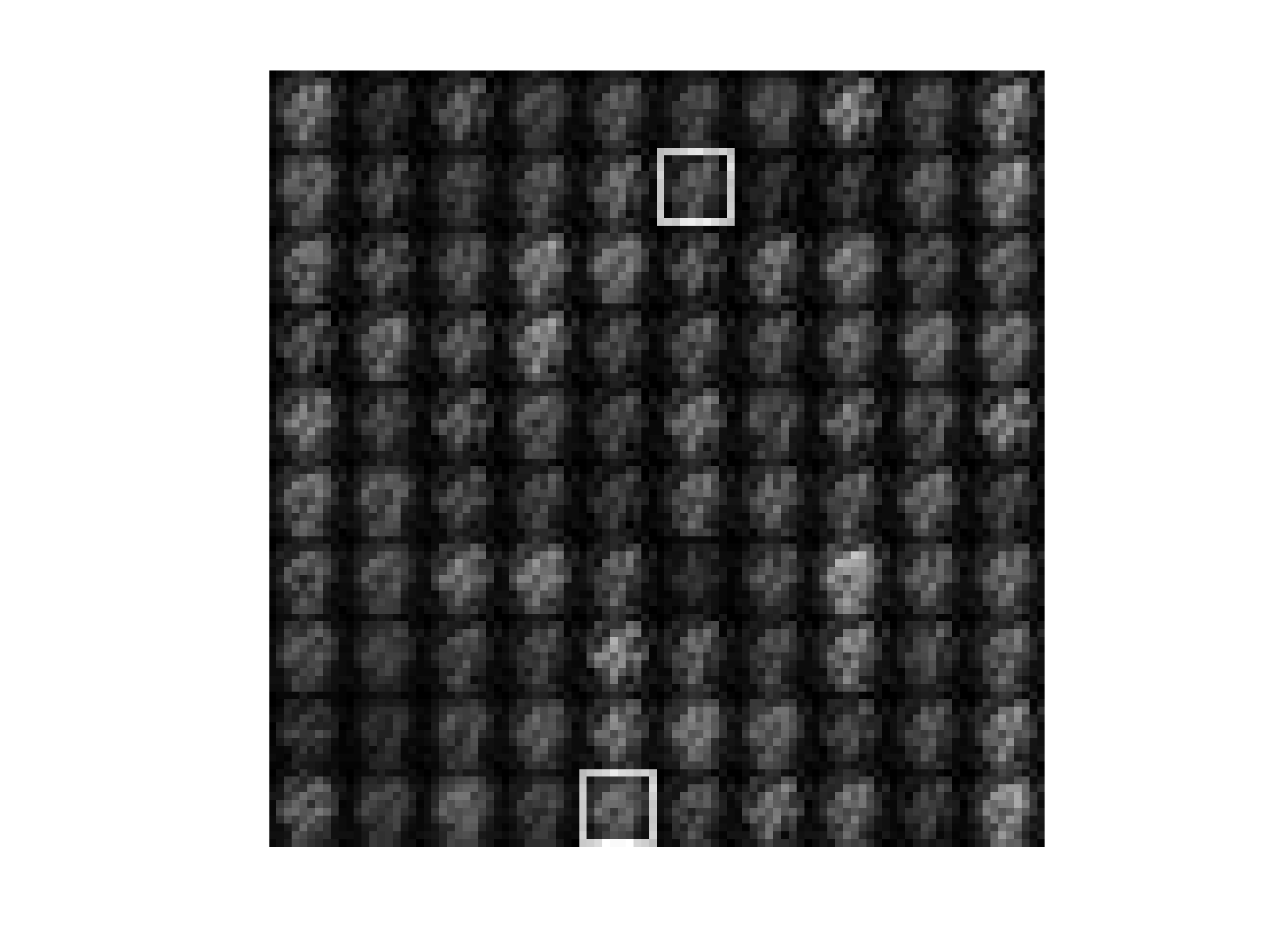}
\includegraphics[height=0.25\textwidth, trim={2cm 1cm 2cm 1cm},clip]{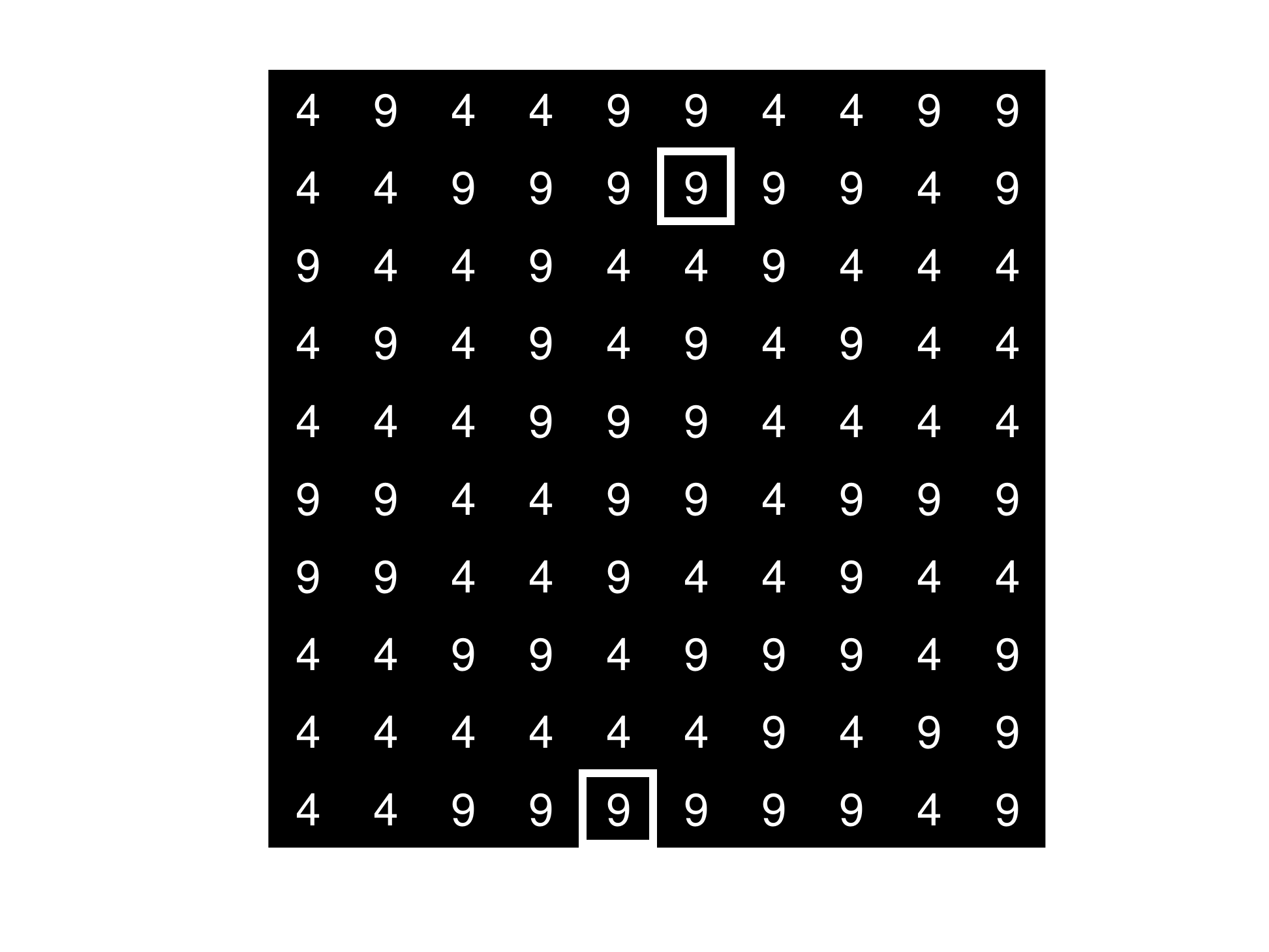}
\end{center}
\caption{ \label{fig.mnist.image}
\textbf{(left)}
Low-resolution versions of 4s and 9s from the MNIST test set.
\textbf{(middle)}
Given $800$ random members of the MNIST training set, run a variant of SqueezeFit to obtain $M$ of rank $11$, and then apply $M^{1/2}$ to the digits in the left panel.
\textbf{(right)} 
Apply $K$-nearest neighbor classification with $K=15$ to the compressed digits in the middle panel to predict labels.
Squares indicate misclassified images.}
\end{figure}

\begin{table}[t]
\begin{center}
\bigskip
\begin{tabularx}{\textwidth}{| >{\centering}X | >{\centering}X | >{\centering}X >{\centering}X | >{\centering}X >{\centering}X | >{\centering}X >{\centering\arraybackslash}X |}
\hline
& \textbf{Id} &  \multicolumn{2}{c|}{\textbf{PCA}} & \multicolumn{2}{c|}{\textbf{LDA}} & \multicolumn{2}{c|}{\textbf{SqueezeFit}}  \\ \hline\hline
$n$ & 0 & 11,791 & 11,791 & 800 & 11,791 & 50 &  800  \\
$r$ & 100 & 5 & 11 & 1 & 1 & 5 &  11  \\ \hline
$K=1$ & 2.15 &  16.62 & 5.47 & 7.78 & 6.07 & 5.97 &  4.01    \\
$K=5$ & 1.95 &  12.55 & 4.26 & 5.32 & 4.62 & 5.47 &  3.61    \\
$K=15$ & 2.26 &  12.00 & 4.11  & 5.32 & 3.87 & 5.27 &  3.87  \\ \hline
\end{tabularx}
\end{center}
\caption{\label{table.mnist}
Percentage of misclassified MNIST digits (4s and 9s) from $K$-nearest neighbor classification after compressing with either PCA, LDA, or SqueezeFit.
For each column, $n$ points from the training set were used to find a compression operator of rank $r$.
(The Id column takes identity to be the ``compression'' operator.)
This compression operator was then applied to the entire training set (of size 11,971) before running $K$-nearest neighbor classification on the test set (of size 1,991).
The LDA and SqueezeFit columns with $n=800$ computed a compression operator based on the same random sample of the training set.}
\end{table}

\subsection{Hyperspectral imagery}

The Indian Pines hyperspectral dataset~\cite{indianpines:online} consists of a $145\times145\times200$ data cube, representing a $145\times145$ overhead scene of farm land with $200$ different spectral reflectance bands ranging from $0.4$ to $2.5$ micrometers.
Each of the $145^2$ pixels in this scene is labeled by a member of $\{0,1,\ldots,16\}$; labels $1$ through $16$ either correspond to some sort of crop or some other material, whereas the label $0$ means the pixel is not labeled.
Since real-world hyperspectral data collection is slow, we wish to classify the contents of a pixel in a hyperspectral image from as few spectral measurements as possible.
For simplicity, we consider the binary classification task of distinguishing crops from non-crops.
In order to evaluate per-pixel compressive classification, we split the Indian Pines pixels with nonzero labels into 70\% training data and 30\% testing data.
Much like we did for MNIST digits above, we downsampled each $200$-dimensional feature vector into a point in $\mathbb{R}^{100}$.
We then applied PCA, LDA, and SqueezeFit to a subset of the training set in order to compute a compression operator, and we then applied this operator to the entire training set before performing $K$-nearest neighbor classification on the test set.
The results are summarized in Table~\ref{table.hyper}.
Much like Table~\ref{table.mnist}, this comparison indicates that SqueezeFit is better at finding low-dimensional components for classification.

\begin{table}[t]
\begin{center}
\bigskip
\begin{tabularx}{\textwidth}{| >{\centering}X | >{\centering}X | >{\centering}X >{\centering}X | >{\centering}X >{\centering}X | >{\centering}X >{\centering\arraybackslash}X |}
\hline
& \textbf{Id} &  \multicolumn{2}{c|}{\textbf{PCA}} & \multicolumn{2}{c|}{\textbf{LDA}} & \multicolumn{2}{c|}{\textbf{SqueezeFit}}  \\ \hline\hline
$n$ & 0 & 7,175 & 7,175 & 300 & 7,175 & 100 & 300  \\
$r$ & 100 & 3 & 5 & 1 & 1 & 3 &  5  \\ \hline
$K=1$ & 1.39 &  4.81 & 2.76 & 4.81 & 3.96 & 2.73 &  2.11    \\
$K=5$ & 1.69 &  4.35 & 2.79 & 3.12 & 3.02 & 2.83 &  2.27    \\
$K=15$ & 2.14 &  4.81 & 3.38 & 2.89 & 2.86 & 2.96 &  2.83  \\ \hline
\end{tabularx}
\end{center}
\caption{\label{table.hyper}
Percentage of misclassified pixels in the Indian Pines test set from $K$-nearest neighbor classification after compressing with either PCA, LDA, or SqueezeFit.
For each column, $n$ points from the training set were used to find a compression operator of rank $r$.
(The Id column takes identity to be the ``compression'' operator.)
This compression operator was then applied to the entire training set (of size 7,175) before running $K$-nearest neighbor classification on the test set (of size 3,074).
The LDA and SqueezeFit columns with $n=300$ computed a compression operator based on the same random sample of the training set.}
\end{table}

\section{Discussion}

In this paper, we introduced projection factor recovery as an idealization of the compressive classification problem, we proposed SqueezeFit as a semidefinite programming approach to this problem, and we provided theoretical guarantees for SqueezeFit in the context of projection factor recovery, as well as numerical experiments that compare SqueezeFit to alternative methods for compressive classification.
Through this investigation, the authors encountered a trove interesting research questions and opportunities for future work, which we discuss below.

First, under what conditions is projection factor recovery possible, both computationally and information theoretically?
In this paper, we focused on the case where SqueezeFit is well suited to perform projection factor recovery.
In particular, we established that the $\mathsf{SNR}$ threshold in Theorem~\ref{thm.exact recovery living edge} is tight up to logarithmic factors, but is the $\sqrt{\log b}$ factor necessary?
Importantly, the data model in Theorem~\ref{thm.exact recovery living edge} allows for exact projection factor recovery when $b$ is not too small (e.g., when $b>d-r$).
However, when $b=1$, exact projection factor recovery is no longer possible, although we observe approximate recovery in Figure~\ref{fig.comparison}.
Under what conditions does SqueezeFit give approximate recovery in this model?

While SqueezeFit has proven to be particularly amenable to theoretical investigation, our numerical experiments encountered a barrier to running the semidefinite program on large data.
For instance, it takes about 50 minutes to run SqueezeFit on a $100$-dimensional dataset comprised of $800$ data points (on a standard Macbook Air 2013).
This limitation forced us to randomly sample the training sets before running SqueezeFit (mimicking~\cite{MixonV:17}), but presumably, one can devise better sampling techniques based on some choice of leverage scores that account for the full geometry of the training set.
Without such sampling, then in order to handle datasets with more points in higher dimensions (e.g., the full MNIST training set~\cite{LeCunCB:online}), we require a different approach to solving \eqref{eq.min rank program}.
For example, one might consider a reformulation of \eqref{eq.min rank program} that optimizes over the $O(rd)$-dimensional Grassmannian of $r$-dimensional subspaces of $\mathbb{R}^d$ instead of the $\Omega(d^2)$-dimensional cone of positive semidefinite $d\times d$ matrices.
While such a formulation will be non-convex, there is a growing body of work that provides performance guarantees for such optimization problems~\cite{CandesLS:15,LiLSW:18,MaunuZL:17,BoumalVB:18}.
In fact, \cite{TorresaniL:07} proposes such a non-convex formulation of LMNN, and it performs well in practice.

Finally, there is room to further improve SqueezeFit for more effective compressive classification.
In practice, one will encounter application-specific design constraints on the sensing operator, and it would be interesting to incorporate these constraints into the SqueezeFit program.
For example, if the sensor $A$ is required to be a linear filter, then $M=A^\top A$ is diagonalized by the discrete Fourier transform, and so SqueezeFit reduces to a linear program.
Presumably, this constrained formulation enjoys runtime speedups over the original semidefinite program, along with refined performance guarantees.
Also, the numerical experiments in this paper focused on $k$-nearest neighbor classifiers in order to isolate the performance of dimensionality reduction alternatives.
However, the best known algorithms for image classification use convolutional neural networks (see~\cite{KrizhevskySH:12}, for example), and so it would be interesting to impose relevant convolution-friendly constraints in the SqueezeFit program to make use of this performance (see~\cite{LiPS:atd} for related work).

\section*{Acknowledgments}

CM and DGM were partially supported by the Center for Surveillance Research at the Ohio State University (an NSF industry/university cooperative research center).
DGM was partially supported by AFOSR FA9550-18-1-0107, NSF DMS 1829955, and the Simons Institute of the Theory of Computing.
SV was partially supported by EOARD FA9550-18-1-7007 and by the Simons Algorithms and Geometry (A\&G) Think Tank.
The views expressed in this article are those of the authors and do not reflect the official policy or position of the United States Air Force, Department of Defense, or the U.S.\ Government.

\end{document}